\tikzset{
    -Latex,auto,node distance =1 cm and 1 cm,semithick,
    state/.style ={ellipse, draw, minimum width = 0.7 cm},
    point/.style = {circle, draw, inner sep=0.04cm,fill,node contents={}},
    bidirected/.style={Latex-Latex,dashed},
    el/.style = {inner sep=2pt, align=left, sloped}
}
\title{Provably Efficient Neural Estimation of Structural Equation Model: An Adversarial Approach}
\author{%
  Luofeng Liao \\
  The University of Chicago\\
  \texttt{luofengl@uchicago.edu} \\
  \And
  You-Lin Chen \\
  The University of Chicago\\
  \texttt{youlinchen@uchicago.edu} \\
  \And 
  Zhuoran Yang \\
  Princeton University\\
  \texttt{zy6@princeton.edu}
  \And
  Bo Dai\\
  Google Research, Brain Team\\
  \texttt{bodai@google.com}
  \And 
  Zhaoran Wang \\
  Northwestern University\\
  \texttt{zhaoranwang@gmail.com}
  \And
  Mladen Kolar \\
  The University of Chicago\\
  \texttt{mkolar@chicagobooth.edu}
}
\newtheorem{assumption}{Assumption}
\newtheorem{example}{Example}
\newtheorem{theorem}{Theorem}[section]
\newtheorem{lemma}[theorem]{Lemma}
\def\##1\#{\begin{align}#1\end{align}}
\def\$#1\${\begin{align*}#1\end{align*}}
\DeclareMathOperator*{\argmin}{arg\,min}
\newcommand{\indep}{\perp \!\!\! \perp}
\def\P{{\mathbb P}}
\def\E{{\mathbb E}}
\def\R{{\mathbb R}}
\def\cO{{\mathcal O}}
\def\cE{{\mathcal E}}
\def\cH{{\mathcal H}}
\def\cF{{\mathcal F}}
\def\cU{{\mathcal U}}
\def\init{{\operatorname{init}}}
\def\indi{{\mathds{1} }}
\def\tp{{^\top}}
\def\fnthe{{\nabla_{\theta} f}}
\def\fh{{\widehat f}}
\def\uh{{\widehat u}}
\def\Fnthe{{\nabla_{\theta} F}}
\def\Fnome{{\nabla_{\omega} F}}
\def\Fhnthe{{\nabla_{\theta} \widehat F}}
\def\Fhnome{{\nabla_{\omega} \widehat F}}
\def\Eall{{\E_{\init, X}}}
\begin{document}

\maketitle

\begin{abstract}
Structural equation models (SEMs) are widely 
used in sciences, ranging from economics to psychology,
to uncover causal relationships underlying a complex system
under consideration and estimate structural parameters of interest. 
We study estimation in a class of generalized SEMs where the object 
of interest is defined as the solution to a linear operator equation.
We formulate the linear operator equation as a min-max game, where both 
players are parameterized by neural networks (NNs), and learn the
parameters of these neural networks using the stochastic gradient descent.
We consider both 2-layer and multi-layer NNs with ReLU activation 
functions and prove global convergence in an overparametrized regime, where
the number of neurons is diverging. The results are established using 
techniques from online learning and local linearization of NNs,
and improve in several aspects the current state-of-the-art. For the first 
time we provide a tractable estimation procedure for SEMs
based on NNs with provable convergence and without the need for sample
splitting.
\end{abstract}

\section{Introduction}

 Structural equation models (SEMs) are widely used in 
 economics \citep{wooldridge2010econometric}, 
 psychology \citep{bollen2005structural}, 
 and causal inference \citep{pearl2009causality}.
 In the most general form \citep{pearl2009causality, peters2017elements},
 an SEM defines a joint distribution over $p$
 observed random variables $\{X_j\}_{j=1}^{p}$ as 
 $X_{j}=f_{j}(X_{\mathrm{pa}_{D}(j)}, \varepsilon_{j})$, $j = 1,\dots, p$,
where $\{ f_j\}$ are unknown functions of interest,
$\{\varepsilon_{j}\}$ are mutually independent noise variables, 
$D$ is the underlying directed acyclic graph (DAG), and 
$\mathrm{pa}_{D}(j)$ denotes the set of parents of $X_j$ in $D$. 
The joint distribution of $\{X_j\}$ is Markov with respect to the graph $D$. 

In most cases, estimation of SEMs are based on the conditional moment restrictions implied by the model.
For example, some observational data can be thought of as coming from 
the equilibrium of a dynamic system.
Examples include dynamic models where 
an agent interacts with the environment, 
such as
in reinforcement learning \cite{dann2014policy}, 
consumption-based asset pricing models \citep{escanciano2015nonparametric}, 
and rational expectation models \citep{hansen1982generalized}.
In these models, 
the equilibrium behavior of the agent is characterized by 
conditional moment equations. 
A second example is instrument variable (IV) regression, where conditional moment equations also play a fundamental role. IV regression is used to
estimate causal effects of input $X$ on output $Y$
in the presence of confounding noise $e$ \citep{newey2003instrumental}.
Finally, in time-series and panel data models,
observed variables exhibit temporal or cross-sectional dependence
that can also be depicted by conditioning \citep{su2013nonparametric}.

For these reasons, we study estimation of structural parameters based on the conditional moment restrictions implied by the model. We propose \textit{the generalized structural equation model}, 
which takes the form of a linear operator equation
\# \label{eq:sem} 
Af = b,
\#
where $A:\mathcal{H} \to \mathcal{E}$ is a conditional expectation operator, which in most settings is only accessible by sampling, $\mathcal{H}$ and $\mathcal{E}$ are separable Hilbert spaces of square 
integrable functions with respect to some random variables, 
$f\in \mathcal{H}$ is the structural function of interest, 
and $b\in \mathcal{E}$ is known or can be estimated. 
Section \ref{sec:shortex} provides a number of important examples from
causal inference and econometrics that fit into the framework \eqref{eq:sem}.

Our contribution is threefold. 
\textbf{First}, we propose a new min-max game formulation for estimating $f$ in \eqref{eq:sem},
where we parameterize both players by neural networks (NN). 
We derive a stochastic gradient descent algorithm to learn the parameters 
of both NNs. In contrast to several recent works that rely on RKHS theory
\citep{dai17a, muandet2019dual, singh2019kernel}, our method enjoys expressiveness 
thanks to the representation power of NNs. Moreover, our algorithm does not need sample splitting, which is a common issue in some recent works \citep{hartford2017deep, lewis2018adversarial}. 
\textbf{Second}, we analyze convergence rates of the proposed algorithm in 
the setting of 2-layer and deep NNs using techniques from online learning 
and neural network linearization. We show the algorithm finds a \textit{globally optimal} 
solution as the number of iterations and the width of NNs go to infinity. In comparison, recent works incorporating NNs into SEM \citep{hartford2017deep, lewis2018adversarial, NIPS2019_8615} lack convergence results.
Furthermore, we derive a consistency result under suitable smoothness assumptions
on the unknown function $f$. 
\textbf{Finally}, we demonstrate that our model enjoys wide application in econometric
and causal inference literature through concrete examples, including
non-parametric instrumental variable (IV) regression, 
supply and demand equilibrium model,
and dynamic panel data model.

\subsection{Examples of generalized SEM} \label{sec:shortex}

We describe three examples of generalized SEM: 
IV regression, simultaneous equations models,
and dynamic panel data model. 
In Appendix \ref{sec:semexamples}, we introduce two more 
examples: proxy variables of unmeasured confounders in causal inference \citep{miao2018identifying} and 
Euler equations in consumption-based asset pricing model \citep{escanciano2015nonparametric}. Other examples
that fit into the generalized SEM framework, but are not detailed in the paper,
include nonlinear rational expectation models \citep{hansen1982generalized}, policy evaluation in 
reinforcement learning, inverse reinforcement learning \citep{ng2000algorithms}, 
optimal control in linearly-solvable MDP \citep{dai17a}, and hitting time of stationary process \citep{dai17a}.

\begin{example} [Instrumental Variable Regression \citep{newey2003instrumental, hartford2017deep, horowitz2011applied}] \label{ex:iv} \normalfont
In many applied problems
endogeneity in regressors arises from omitted variables, 
measurement error, and simultaneity \citep{wooldridge2010econometric}. 
IV regression provides a general solution to the
problem of endogenous explanatory variables. 
Without loss of generality, consider the model of the form
\begin{align}
\label{eq:iv}
Y=g_{0}(X)+\varepsilon, \quad \E[\varepsilon \mid Z]=0,
\end{align}
where $g_0$ is the unknown function of interest, 
$Y$ is the response, 
$X$ is a vector of explanatory variables, 
$Z$ is a vector of instrument variables, 
and $\varepsilon$ is the noise term. 
To see how the model fits our framework, define the operator $A: L^2(X) \to L^2(Z)$, $(Ag)(z) = \E[g(X) \mid Z = z]$. Let $b(z) = \E[Y\mid Z = z] \in L^2(Z)$. The structural equation \eqref{eq:iv} can be written as $Ag = b$.

\end{example}

\begin{example}[Simultaneous Equations Models] \label{ex:simul} \normalfont
Dynamic models of agent’s optimization problems or 
of interactions among agents often exhibit simultaneity. 
Consider a demand and supply model as a prototypical example \citep{matzkin2008identification}. 
Let $Q$ and $P$ denote the quantity sold and price of a product, respectively.
Then
\begin{equation}\label{eq:demandandsupply}
    \begin{aligned}
    &Q =D\left(P, I\right) + U_1\,,\,\, P=S\left(Q, W\right) + U_2,
\\
&\E[U_1\mid I,W ] = 0\,, \,\, \E[U_2 \mid I, W] = 0, 
    \end{aligned}
\end{equation}
where $D$ and $S$ are functions of interest, 
$I$ denotes consumers’ income, 
$W$ denotes producers’ input prices, 
$U_1$ denotes an unobservable demand shock, 
and $U_2$ denotes an unobservable supply shock. Each observation of $\{P, Q, I, W\}$ is a solution to the equation \eqref{eq:demandandsupply}. In Appendix \ref{sec:semexamples} we cast it into the form \eqref{eq:sem}. 
The knowledge of $D$ is essential in predicting the effect of financial policy. For example, let $\tau$ be a percentage tax paid by the purchaser. Then the resulting equilibrium quantity is the solution $\hat Q$ to the equation
$\hat Q = D\big((1+\tau)(S(\hat Q, I) + U_1), W \big) + U_2.$ 
\end{example}

\begin{example}[Dynamic Panel Data Models \citep{su2013nonparametric}] \label{ex:dynpanel} \label{ex:panel}
\normalfont
Exploiting how outcomes vary across units and over time in the dataset is a common approach to identifying causal effects \citep{agrawal2019economics}. 
Panel data are comprised of 
observations of multiple units measured over multiple time periods. 
We consider a dynamic model that includes time-varying regressors 
and allows us to investigate
the long-run relationship between economic factors \citep{su2013nonparametric}:
\begin{equation}  \label{eq:paneldatamodel}
    \begin{aligned}
    & Y_{i t}  =m\left(Y_{i, t-1}, X_{i t}\right)+\alpha_{i}+\varepsilon_{i t},
\\
& \E [\varepsilon_{i t} \mid \underline{Y}_{i, t-1}, \underline{X}_{i t} ] =0, \quad i=1, \ldots, N, \quad t=1, \ldots, T. 
    \end{aligned}
\end{equation}
Here $X_{it} $ is a $p \times 1$ vector of regressors, 
$m$ is the unknown function of interest, 
$\alpha_i$'s are the unobserved individual-specific fixed effects, 
potentially correlated with $X_{it}$, 
and $\varepsilon_{it}$'s are idiosyncratic errors. 
$\underline{X}_{i t} \coloneqq (X_{i t} \tp, \ldots, X_{i 1} \tp )\tp$ 
and 
$\underline{Y}_{i, t-1} \coloneqq \left(Y_{i, t-1}, \ldots, Y_{i 1}\right) \tp$ 
are the history of individual $i$ up to time $t$. After first differencing, we can cast \eqref{eq:paneldatamodel} into equation of the form \eqref{eq:sem} (see Appendix \ref{sec:semexamples}).

\end{example}

\subsection{Related work}
\textbf{Neural networks in structural equation models}. IV regression and generalized method of moments (GMM) \citep{hansen1982large} are two important tools in structural estimation. For example, the work of \citet{blundell2007semi} estimates system of nonparametric demand curves with endogeneity and a sieve-based measure of ill-posedness of the statistical inverse problem is introduced. The work of \citet{chen2012estimation} allows for various convex or/and lower-semicompact penalization on unknown structural functions. Typical nonparametric approaches to to IV regression include kernel density estimators \citep{newey2003instrumental, chen2018optimal} and spline regression \citep{darolles2011nonparametric, carrasco2007linear}. However, traditional nonparametric methods usually suffer from the curse of dimensionality and the lack of guidance on the choice of kernels and splines. 

Existing work on structural estimation using NNs, best to our knowledge, includes Deep IV \citep{hartford2017deep}, Deep GMM \citep{NIPS2019_8615} and Adversarial GMM \citep{lewis2018adversarial}. However, due to the artifacts in saddle-point problem derivation and non-linearity of NNs, these methods suffer from computational cost \citep{hartford2017deep}, the need of sample splitting \citep{lewis2018adversarial, hartford2017deep} or lack of convergence results \citep{NIPS2019_8615, lewis2018adversarial}. The use of NN also appears in works in econometrics. The work of \citet{chen2009land} applies NN to estimate unknown habit function in consumption based asset pricing model. The work of \citet{max2018deep} discusses the use of NN in semi-parametric estimation but not computational issues.

Kernel IV \citep{singh2019kernel} and Dual IV \citep{muandet2019dual} apply reproducing kernel Hilbert space (RKHS) theory to IV regression. 
Dual IV is closely related to the work of \citet{dai17a}, where the authors discuss problems of the form $\min_f \mathbb{E}_{x,y}[\ell(y, \mathbb{E}_{z|x}[f(x,z)])]$ and reformulate it as a min-max problem using duality, interchangeability principle, and dual continuity. In Appendix \ref{app:dualivcomment}, we show that our minimax formulation of IV has a natural connection to GMM compared to Dual IV.

Finally, we notice an excellent concurrent work \citep{mini2020dikkala} which discusses the statistical property of a class of minimax estimator for conditional moment restriction problems. The proposed estimator in that paper is almost identical to ours, and yet we focus on showing convergence of training with NN using neural tangent kernel theory. There is significant distinction from their work.

\textbf{Neural tangent kernel and overparametrized NN}.
Recent work on neural tangent kernel (NTK) \citep{jacot2018neural} shows that in the limit when the number of neurons goes to infinity, the nonlinear NN function can be represented by a linear function
specified by the NTK. Consequently, the optimization problem parametrized by NNs reduces to a convex problem, and can be tackled by tools in classical convex optimization. Examples following this idea include \citep{cai2019neural, wang2019neural, xu2019finite}. In fact, the present paper follows a similar philosophy, by reducing the analysis of neural gradient update to regret analysis of convex online learning, in the presence of bias and noise in the gradient. Finally, the present work is also related to recent advances in overparametrized NNs \citep{allen2019convergence, allen2019learning, gao2019convergence, jacot2018neural,lee2019wide, zhang2016understanding, neyshabur2018towards}. These works point out that NNs exhibit an implicit local linearization which allows us to interpret the former as a linear function when they are
trained using gradient type methods. The present paper is
built on an adaptation of these results.

\subsection{Notations}

We call $(f^*, u^*) \in  \cF \times \cU$ a saddle point of a function 
$\phi: \cF \times \cU \rightarrow \R$ if for all 
$f \in \cF$, $u \in \cU$, $\phi(f^*, u) \leq \phi(f^*, u^*) \leq \phi(f, u^*)$. 
The indicator function $\indi \{\cdot\}$ is defined as
$\indi\{A\} = 1$ if the event $A$ is true; otherwise $\indi \{A\} = 0$. 
Let $[n] = \{1,2,\dots, n \}$. For two sequences $\{ a_n\}, \{b_n\}$, 
the notation $b_n = \cO(a_n)$ represents that there exists a constant $C$ 
such that $b_n \leq C a_n$ for all large $n$. We write $a_n \sim b_n$ if $a_n = O(b_n)$ and $b_n = O(a_n)$. The notation 
$\tilde{\cO}$ ignores logarithmic factors. 
For a matrix $A$, let $\|A \|_F$ be the Frobenius norm. 

For a probability space $(\Omega,\cF,\P)$, 
let $X:\Omega \to \R^p$ be a $p$-dimension random vector. 
The probability distribution of $X$ is characterized by its
joint cumulative distribution function $F$. 
Partition $X$ into $X = [X_1^\top, X_2^\top]\tp$ 
where $X_1\in \R^{p_1}, X_2 \in \R^{p_2}$, and 
let $F_{X_1}, F_{X_2}$ be the marginal distribution functions, 
respectively. Denote by 
$L_F^2(\R^{p_1}, F_{X_1}) = \{ f_1:\R^{p_1} \to \R: \E_{X_1}[ f_1(X_1)^2] < \infty \}$ 
the Hilbert space of real-valued square integrable functions of $X_1$ and 
similarly define $L_F^2(\R^{p_2}, F_{X_2})$. 
For ease of presentation we denote $L_F^2(\R^{p_1}, F_{X_1})$ by $L^2(X_1)$ 
when the context is clear. For $f, g \in L_F^2$, 
the inner product is defined by $\langle f,g\rangle_{L^2(X)} = \E_X [f(X) g(X)] $. For a linear operator $A:\cH \to \cE$ denote by $\mathcal{N}(A) = \{f \in \cH : Af = 0\}$ 
its null space. Denote by $A^*$ the adjoint of a bounded linear
operator $A$. For a subspace $B \subset \cH$ in a Hilbert space $\cH$, 
denote by $B^\bot = \{ a\in \cH : \langle a, b\rangle_\cH = 0,\forall b\in B\}$
the orthogonal complement of $B$ in $\cH$.

\section{Adversarial SEM} \label{sec:adversarialsem}

We formalize our problem setup and introduce 
the Tikhonov regularized method for finding a solution 
for the operator equation in \eqref{eq:sem} in Section \ref{sec:setup}. 
In Section \ref{sec:saddlepointformulation} we derive a saddle-point formulation 
of our problem. 
The players of the resulting min-max game are parametrized by NNs, detailed in Section \ref{sec:nnparame}.

\subsection{Problem setup} \label{sec:setup}
Let $X = [X_1^\top, X_2^\top]\tp$ be a random vector with distribution $F_X$. Let $F_{X_1}, F_{X_2}$ be the marginal distributions of $X_1$ and $X_2$, respectively. We assume there are no common elements in $X_1$ and $X_2$. Furthermore, suppose there is a regular conditional distribution for $X_1$ given $X_2$.
Let $\cH = L^2(X_1)$ and $\cE = L^2(X_2)$. We let $A : \cH \to \cE$ be the conditional expectation operator defined as 
\[(Af)(\cdot) = \E[f(X_1) \mid X_2 = \cdot \,].\]
We want to estimate the solution $f$ to the equation \eqref{eq:sem}, $Af = b$, for some known or estimable $b \in \cE$. In statistical learning literature, 
\eqref{eq:sem} is called \textit{stochastic ill-posed problem} 
when $b$ or both $A$ and $b$ have to be estimated \citep{vapnik1998statistical}. 
In the linear integral equation literature, 
when $A$ is compact, \eqref{eq:sem} belongs to the class of Fredholm equations of type I. 
An inverse problem perspective on conditional moment problems 
is provided in \citep{carrasco2007linear}.


A compact operator\footnote{See Appendix \ref{app:compactofcondexpop} for a discussion of 
when a conditional expectation operator is compact.} with infinite dimensional range cannot have a continuous inverse \citep{carrasco2007linear}, 
which raises concerns about stability of operator equation \eqref{eq:sem}. 
A classical way to overcome the problem of instability is to look for a 
Tikhonov regularized solution, which is uniquely defined \citep{kress1989linear}. 
For all $b \in \mathcal{E}$, $\alpha > 0$, we define 
the Tikhonov regularized problem
\# \label{eq:tikhnovov2}
f^\alpha = \argmin_{f \in \mathcal{H}}  \tfrac{1}{2} \|Af - b\|^2_\mathcal{E} + \tfrac{\alpha}{2} \| f\|^2_{\mathcal{H}}.
\#

\subsection{Saddle-point formulation} \label{sec:saddlepointformulation}

From an optimization perspective, problem \eqref{eq:tikhnovov2} is difficult to solve in that the conditional expectation operator is nested inside the square loss. 

First, it is difficult to estimate the conditional expectation in some cases since we have only limited samples coming from the conditional distribution $p(X_1 \mid X_2)$. In the extreme case, for each value of $X_2=x_2$ we only observe one sample.

Second, when approximating $f$ using some parametrized function class, we encounter the so-called double-sample issue. Let's investigate Example \ref{ex:iv}. In the nonparametric IV problem $\E[g(X)| Z] = \E[Y| Z]$, we want to estimate $g$. Consider the square loss $L(g)=\E_Z 
\big[ \big(\E[g(X)-Y| Z] \big)^2 \big]$. Assume $g$ is approximated by a function parameterized with $\theta$. Taking the gradient w.r.t. $\theta$ and assuming exchange of $\nabla_\theta$ and $\E$, we get $\nabla_\theta L(g)=2\E_Z \big[  \E[g(X)-Y| Z] 
\cdot \E[\nabla_\theta g(X) | Z] \big]$. Assume we observe iid samples of $(X,Y,Z)$. The product of two conditional expectation terms implies that, to obtain an unbiased estimate of the gradient, we will need two samples of $(X,Y,Z)$ with $Z$ taking the same value. This is usually unlikely except for simulated environments.

Our saddle-point formulation circumvents such problems by using the probabilistic property of conditional expectation. The proposed method also offers some new insights into the saddle-point formulation of IV regression \citep{muandet2019dual, NIPS2019_8615}, which shows our derivation is closely related to GMM. This is discussed in Appendix \ref{app:dualivcomment}.

Now we derive a min-max game formulation for \eqref{eq:tikhnovov2}. Assume $b$ is known. Let $R:\cH \to \R_+$ be some suitable norm on $\cH$ that captures smoothness of the function $f$. We consider the constrained form of minimization problem \eqref{eq:tikhnovov2}: $\min_{f\in \cH} \,  \frac12 R(f) \text{ subject to } Af = b$. For some positive number $\alpha > 0$, we define the Lagrangian with penalty on the multiplier $u \in \cE$,
\[ \tilde L(f,u) =  \tfrac12 R(f) + \langle Af - b, u \rangle_{\cE}  - \tfrac{\alpha}{2} \| u \|_{\cE }^2.\]
Without loss of generality, we move the penalty level $\alpha$ to $R(f)$. Finally, using a property of conditional expectation that 
$
\langle Af, u \rangle_\cE = \E_{X_2} \big[ \E[f(X_1) \mid X_2] u(X_2) \big] = \E[f(X_1) u(X_2)]
$
and choosing $R(f) = \| f\|_\cH^2$, we arrive at our saddle-point problem
\# \label{eq:minmax_sem}
\min_{f \in L^2(X_1)}  \max_{u \in L^2(X_2)} \E \big[ \big(f(X_1) - b(X_2) \big)  u(X_2) + \tfrac{\alpha}{2} f(X_1)^2  - \tfrac{1}{2} u(X_2)^2 \big]. 
\#

We remark that as long as $R(f)$ can be estimated from samples, our subsequent algorithm and analysis work with some adaptations. Note that the above derivation is also suitable for equations of the form $(I-K)f = b$, where $K$ is a conditional expectation operator (e.g., Example \ref{ex:ccap} in Appendix \ref{sec:semexamples}). Moreover, 
the function $b$ can be either known or estimable from the data, 
i.e., $b$ can be of the form $b(X_2) = \E[ \tilde{ b}(X_1, X_2) \mid X_2 ]$ 
where $\tilde {b}$ is known. 

\section{Neural Network Parametrization} \label{sec:nnparame}

The recent surge of research on the representation power of NNs \citep{jacot2018neural, allen2019convergence, allen2019learning, lee2019wide, arora2019fine, cai2019neural}  motivates us to use NNs as approximators in \eqref{eq:minmax_sem}. Consider the 2-layer NN with parameters $B$ and $m$ (to be defined in \eqref{eq:twolayernn}). As the width of the NN, $m$, goes to infinity, the class NNs approximates a subset of the reproducing kernel Hilbert space induced by the kernel $K(x, y)=\mathbb{E}_{a \sim \mathcal{N}(0, \frac1d I_d)}\left[\indi \left\{a^{\top} x>0\right\} \indi \left\{a^{\top} y>0\right\} x^{\top} y\right]$. Such a subset is a ball with radius $B$ in the corresponding RKHS norm. This function class is sufficiently rich, if the width $m$ and the radius $B$ are sufficiently large \citep{arora2019fine}. 

However, due to non-linearity of NNs, to devise an algorithm for the NN-parametrized problem \eqref{eq:minmax_sem} with theoretical guarantee is no easy task. In this section, we describe the NN parametrization scheme and the algorithm. As a main contribution of the paper, we then provide formal statements of results on convergence rate and estimation consistency. 

To keep the notation simple, we assume $X_1$ and $X_2$ are of the same dimension $d$. 
We parametrize the function spaces $L^2(X_1)$ and $L^2(X_2)$ in \eqref{eq:minmax_sem} 
by a space of NNs, $\cF_{\text{NN}}$, defined in \eqref{eq:twolayernn} and \eqref{eq:multilayernn}
below. With this parameterization, we write the primal problem in \eqref{eq:tikhnovov2} as 
\# \label{eq:primewithnn}
\min_{f\in \cF_{\text{NN}}} L(f) \coloneqq  \tfrac{1}{2} \|Af - b\|^2_\mathcal{E} + \tfrac{\alpha}{2} \| f\|^2_{\mathcal{H}},
\#
and the min-max problem in \eqref{eq:minmax_sem} becomes
\# \label{eq:minmaxwithnn}
\min_{f\in \cF_{\text{NN}}} \max_{u\in \cF_{\text{NN}}}
 \phi(f,u) \coloneqq \E \big[ \big(f(X_1) - b(X_2) \big)  u(X_2) + \tfrac{\alpha}{2} f(X_1)^2  - \tfrac{1}{2} u(X_2)^2 \big]. 
\#

Problem \eqref{eq:minmaxwithnn} involves simultaneous optimization 
over two NNs. Notice that for each fixed $f$ in the outer minimization, 
the maximum of the inner maximization over $\cE$ is attained at 
$u(\cdot) = \E[f(X_1) \mid X_2 = \cdot ] - b(\cdot) \in \cE$. This can be seen by noting $ \max_{u \in \cE} \,\{ \langle Af-b, u \rangle_\cE - \tfrac12 \| u \|_\cE^2 \} = \tfrac12 \| Af - b\|_\cE^2$.
If for all $f \in \cF_{\text{NN}}$ such maximum is attained in 
$\cF_{\text{NN}}$, then every primal solution $f^* \in \cF_{\text{NN}}$ 
in the saddle point of \eqref{eq:minmaxwithnn}, $(f^*, u^*)$, 
is also an optimal solution to the problem \eqref{eq:primewithnn}. 

Next we introduce the function classes of NNs and the initialization schemes.

\subsection{Neural Network Parametrization}
\textbf{2-layer NNs.} Consider the space of 2-layer NNs with ReLU activations and initialization $\Xi_0 = [W(0), b_1, \dots, b_m]$
\# \label{eq:twolayernn}
\mathcal{F}_{d, B, m}(\Xi_0)=\bigg\{ x\mapsto \frac{1}{\sqrt{m}} \sum_{r=1}^{m} b_{r}  \sigma(W_r \tp x): 
W \in S_{B} \bigg\},
\#
where $\sigma(z) = \indi \left\{z>0\right\}\cdot z$ is the ReLU activation, 
$b_1, \ldots, b_m$ are scalars, and 
\[S_B = \left\{W \in \mathbb{R}^{m d}:\|W-W(0)\|_{2} \leq B\right\}\]
is the $B$-sphere centered at the initial point $W(0)\in \R^{md}$. 
Here we denote succinctly by $W$ the weights of a 2-layer NN stacked into a 
long vector of dimension $md$, and use $W_r\in \R^m$ to access 
the weights connecting to the $r$-th neuron, 
i.e., $W = [W_1 \tp ,\dots, W_m \tp] \tp$. 
Each function in $\cF_{B,m}$ is differentiable with respect to $W$, 
1-Lipschitz, and bounded by $B$. 
We state the following distributional assumption on initialization.

\begin{assumption}[NN initialization, 2-layer, \citep{jacot2018neural}] \label{as:nninit}
Consider the 2-layer NN function space $\mathcal{F}_{d, B, m}(\Xi_0)$ defined in \eqref{eq:twolayernn}. 
All initial weights and parameters, collected as $\Xi_0 = [W(0), b_1, \cdots, b_m]$, are independent,
and generated as $b_{r} \sim \text{Uniform}(\{-1,1\})$
and $W(0) \sim \mathcal{N}\left({0}, \tfrac{1}{d} \mathbf{I}_{d}\right)$. 
During training we fix $\{b_r \}_{r=1}^m$ and update $W$.
\end{assumption}

\textbf{Multi-layer NNs.} The class of $H$-layer NN, $\cF_{d,B,H,m}$ with initialization $\Xi_{H,0} = \big\{ A,\{W^{(h)}(0)\}_{h=1}^{H}, b \big\}$ is defined as 
\begin{multline} \label{eq:multilayernn}
\cF_{d,B,H,m}(\Xi_{H,0}) = \Big\{ 
    x \mapsto b^{\top} x^{(H)} \text{ where } x^{(h)}=\tfrac{1}{\sqrt{m}} \cdot  \sigma(W^{(h)}  x^{(h-1)}), 
    h \in[H], 
    \\
    x^{(0)}=A x:   W \in S_{B,H} 
   \Big\},
\end{multline}
where 
$W=(\operatorname{vec}(W^{(1)})^{\top}, \ldots, \operatorname{vec}(W^{(H)})^{\top})^{\top} \in \mathbb{R}^{H m^{2}}$ 
is the collection of weights $W^{(h)} \in \R^{m^2}$ from all middle layers, 
$x^{(h)}$ is the output from the $h$-th layer, 
$A \in \mathbb{R}^{m \times d}$, 
$b\in \R^m$, 
the function $\sigma$ is applied element-wise, 
and 
\[
S_{B,H}=\big\{W \in \mathbb{R}^{H m^{2}}:\|W^{(h)}-W^{(h)}(0)\|_{\mathrm{F}} \leq B \text { for any } h \in[H]\big\}.
\]
We use the following initialization scheme.

\begin{assumption}[NN initialization, multi-layer, \citep{allen2019convergence, gao2019convergence}] \label{as:nninitmulti}
Consider the space of multi-layer NNs $\cF_{d,B,H,m}(\Xi_{H,0})$ defined in \eqref{eq:multilayernn}. Each entry of $A$ and $\{W^{(h)} (0)\}_{h=1}^{H}$ is independently initialized by $N(0,2)$, and entries of $b$ are independently initialized by $N(0,1)$. Assume $m = \Omega  (d^{3 / 2} B^{-1} H^{-3 / 2} \log ^{3 / 2}(m^{1 / 2}  B^{-1}) )$ and $B= \cO(m^{1 / 2} H^{-6} \log ^{-3} m)$. All initial parameters, $\Xi_{H,0} = \big\{ A,\{W^{(h)}(0)\}_{h=1}^{H}, b \big\}$, are independent. During training we keep $A,b$ fixed and update $W$.
\end{assumption}

We overload notations and denote by $\E_{\init}[\cdot]$ the expectation taken over the random variables $\Xi_0$ or $\Xi_{H,0}$, the randomness of NN initialization.

\subsection{Algorithm} \label{sec:theoryconv}


To describe an implementable algorithm, we rewrite the saddle-point problem \eqref{eq:minmaxwithnn} in terms of NN weights. Denote the weights of NNs $f$ and $u$ in \eqref{eq:minmaxwithnn} by $\theta$ and $\omega$, respectively. Now $\theta$ and $\omega$ play the role of $W$ in \eqref{eq:twolayernn} (resp. \eqref{eq:multilayernn}) since during training only the weights $W$ in \eqref{eq:twolayernn} (resp. \eqref{eq:multilayernn}) are updated. 
For brevity set $f_\theta(\cdot) = f(\theta; \cdot)$ and $u_\omega(\cdot) = u(\omega; \cdot)$. 
With a slight abuse of notation, we use $\phi(\theta,\omega)$ and $\phi(f_\theta,u_\omega)$ (defined in \eqref{eq:minmaxwithnn}) interchangeably.
Note $\phi$ is convex in the NN $f_\theta$ but not in the NN weights $\theta$.
Let $F(\theta, \omega; x_1, x_2) = u_\omega(x_2)  f_\theta(x_1) - u_\omega(x_2) b(x_2) - \frac12 u_\omega^2(x_2) + \tfrac{\alpha}{2} f_\theta^2(x_1)$. The saddle-point problem \eqref{eq:minmaxwithnn} is now rewritten as
\# \label{eq:minmaxwithsb}
\min_{\theta \in S_B} \max_{\omega\in S_B} \phi(\theta, \omega) = \E \big[F(\theta, \omega; X_1, X_2)\big].
\#
\ref{ppoalgo} is the proposed stochastic primal-dual algorithm for solving the game \eqref{eq:minmaxwithnn}. 
Given initial weights $\theta_1$ and $\omega_1$, stepsize $\eta$, 
and i.i.d. samples $\{X_{1,t}, X_{2,t}\}$, for $t = 2,\dots, T-1$,
\begin{equation}\label{ppoalgo}  \tag{Algorithm 1}
\begin{aligned} 
 & \theta_{t+1}=\Pi_{S_B} \big(\theta_{t}-\eta \nabla_{\theta} F(\theta_t, \omega_{t}; X_{1,t}, X_{2,t}) \big ), \\
 &  \omega_{t+1}=\Pi_{S_B} \big (\omega_{t} + \eta \nabla_{\omega} F(\theta_t, \omega_t;  X_{1,t}, X_{2,t}) \big).
 \end{aligned} 
\end{equation}
Here $\Pi_{S_B}$ is the projection operator. The search spaces in \eqref{eq:minmaxwithsb} and the projection operator should be replaced by $S_{B,m}$ and $\Pi_{S_{B,m}}$, respectively, when multi-layer NNs are used. If $b$ takes the form $b(X_2) = \E[ \tilde{ b}(X_1, X_2) \mid X_2 ]$ our algorithm proceeds by replacing $b$ in $F$ with $\tilde{b}$. 

Define by $\mathcal{D}=\sigma \big\{\{X_{1,t}, X_{2,t}\}_{t=1}^T \big\}$ the $\sigma$-algebra generated by the training data. Define the average of NNs as our final output
\#\widebar f_T(\cdot) = \frac{1}{T}\sum_{t=1}^T f(\theta_t; \cdot)\,. \label{eq:nnoutput}\#

\section{Main results}

Due to nonlinearity of NNs, $\phi$ is not convex-concave in $(\theta,\omega)$, which makes the analysis of \ref{ppoalgo} difficult. However, as will be shown in Theorem \ref{thm:globalconvergence}, under certain assumptions \ref{ppoalgo} enjoys \textit{global convergence} as $T$ and $m$ go to infinity. 
For a candidate solution $f$, we consider the suboptimality $E(f)$ as a measure of quality of solution, i.e.,
\# \label{eq:loss}
E( f) = L( f) - L^*,
\#
where $L^* = \min_{f \in \cF_{B,m}} L(f)$ is the minimum value of $L$ over the space of NNs. We define similar quantities when multi-layer NNs are used. Next we describe regularity assumptions on the data distribution.

\begin{assumption}[Bounded support and bounded range] \label{as:bdsupp}
 Assume $ \max\{\|X_1\|_2, \|X_2\|_2\} \leq 1$ almost surely. 
    Assume $b(X_2)$ is bounded almost surely.
\end{assumption}

\begin{assumption}[Regularity of data distribution] \label{as:inputdistn}
    Assume that there exists $c >0$,
    such that
    for any unit vector $v \in \R^{p}$ and any $\zeta >0$, 
    $\P(|v^{\top} X_1| \leq \zeta) \leq c \zeta$, $\P(|v^{\top} X_2| \leq \zeta) \leq c \zeta$. 
 
\end{assumption} 

\begin{assumption}[The conditional expectation operator is closed in $\cF_{\text{NN}}$] \label{as:containtruefunction}
With high probability with respect to NN initialization, for any $f \in \cF_{B,m}$ (or $\cF_{B,H,m}$), $u(\cdot) = \E[f(X_1) \mid X_2 = \cdot \,] - b(\cdot)$ belongs to the class $\cF_{B,m}$ (or $\cF_{B,H,m}$).
\end{assumption}
In Assumption \ref{as:bdsupp}, boundedness of the random variable $b(X_2)$ is satisfied in common applications. Assumption \ref{as:inputdistn} is used when invoking the linearization effects of 2-layer NNs; see Lemma \ref{lm:linearNN}. Assumption \ref{as:containtruefunction} ensures the connection 
between the min-max problem \eqref{eq:minmaxwithnn} and 
the primal problem \eqref{eq:primewithnn}. Assumption \ref{as:containtruefunction} can be removed by incorporating an approximation error term in the error bound. We are ready to state the global convergence results 
for \ref{ppoalgo}. The proof is presented in Appendix \ref{pf:thm:globalconvergence}.

\begin{theorem} [Global convergence of \ref{ppoalgo}]  \label{thm:globalconvergence}
Consider the iterates generated by \ref{ppoalgo} with stepsize $\eta$. 
Let $a = \max \{\alpha,1\}$. Recall $\E_{\init}[\cdot]$ is the expectation w.r.t. NN initialization. For the averaged NN $\widebar f_T$ (defined in \eqref{eq:nnoutput}), its suboptimality $E(\widebar f_T)$ (defined in \eqref{eq:loss}) satisfies the following bounds.

1. (2-layer NNs) Under Assumption \ref{as:nninit}, \ref{as:bdsupp}, \ref{as:inputdistn} and \ref{as:containtruefunction}, with probability over $1 - 2\delta$ with respect to the training data $\mathcal{D}$,
\# \label{eq:errboundtwolayer}
\E_{\init} \big [ E(\widebar f_T) \big ] = \cO\Big( a\eta B + \frac{B}{T\eta} + \frac{a B^{3/2}\log^{1/2}(1/\delta)}{T^{1/2}} + \frac{aB^{5/2}}{m^{1/4}}\Big).
\#
2. (Multi-layer NNs) Under Assumption \ref{as:nninitmulti}, \ref{as:bdsupp} and \ref{as:containtruefunction}, with probability over $1 -c \delta - c\exp\big(\Omega(\log^2 m) \big)$ with respect to the training data $\mathcal{D}$ and NN initialization $\Xi_{H,0}$,
\[ E(\widebar f_T)  = \cO\Big( P_1 \eta a \log m  + \frac{P_2}{T\eta} + \frac{P_3  a \log m \log^{1/2}(1/\delta) }{T^{1/2}} + \frac{P_4a\log ^{3/2} m }{m^{1/6}}\Big),\]
where $P_1 = H^4B^{4/3}$, $P_2 =H^{1/2}B$, $P_3 = H^{5}B^{2}$, $P_4 = H^6B^3$, and $c$ is an absolute constant.
\end{theorem}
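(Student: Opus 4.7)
The plan is to exploit that, although $\phi(\theta,\omega)$ is nonconvex--nonconcave in the NN weights, the functional $(f,u)\mapsto\phi(f,u)$ is convex in $f\in L^2(X_1)$ and concave in $u\in L^2(X_2)$. Combined with the fact that an overparametrized ReLU network behaves like its local linearization around initialization (each $f_{\theta_t}$ is approximately linear in $\theta_t-\theta_0$ in the NTK feature map), the analysis reduces to a regret bound for projected online gradient descent/ascent on a convex--concave problem, with bias from the linearization error and noise from single-sample stochastic gradients.

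First, I would reduce $E(\bar f_T)$ to a saddle-point gap. Let $f^\star\in\argmin_{f\in\cF_{\text{NN}}}L(f)$ and set $u^*(\cdot)=\E[\bar f_T(X_1)\mid X_2=\cdot]-b(\cdot)$, which by Assumption \ref{as:containtruefunction} lies in $\cF_{\text{NN}}$ with high probability; hence $\max_{u\in\cF_{\text{NN}}}\phi(\bar f_T,u)=L(\bar f_T)$. Using $\phi(f^\star,\bar u_T)\leq L(f^\star)$ and Jensen applied to the convex--concave $\phi$,
\[
E(\bar f_T)\;\leq\;\phi(\bar f_T,u^*)-\phi(f^\star,\bar u_T)\;\leq\;\frac{1}{T}\sum_{t=1}^{T}\bigl[\phi(f_{\theta_t},u^*)-\phi(f^\star,u_{\omega_t})\bigr].
\]
Splitting the summand via convexity in $f$ and concavity in $u$ at $(f_{\theta_t},u_{\omega_t})$ yields two functional regret sums $\tfrac{1}{T}\sum_t\langle\nabla_f\phi_t,f_{\theta_t}-f^\star\rangle_{L^2(X_1)}$ and $\tfrac{1}{T}\sum_t\langle\nabla_u\phi_t,u^*-u_{\omega_t}\rangle_{L^2(X_2)}$, each of which has to be controlled.

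Second, I would tie the parameter update $\theta_{t+1}=\Pi_{S_B}(\theta_t-\eta\Fhnthe)$ to an approximate functional gradient step on the linearized network $\tilde f_\theta(x):=f(\theta_0;x)+\langle\nabla_\theta f(\theta_0;x),\theta-\theta_0\rangle$, which is exactly linear in $\theta$. Since $\E[\Fhnthe\mid\cF_{t-1}]$ is, up to linearization error, the NTK feature-space representation of the functional gradient $\nabla_f\phi(f_{\theta_t},u_{\omega_t})$, the iterates effectively run projected online gradient descent on a convex loss with gradients of norm $\cO(aB)$ over $S_B$, so standard OCO regret gives $\cO(B/(\eta T)+a\eta B)$. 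Invoking Lemma \ref{lm:linearNN} for the two-layer case and the multi-layer linearization results of \citet{allen2019convergence,gao2019convergence}, the pointwise gap $|f_\theta-\tilde f_\theta|$ and the gradient mismatch $\|\nabla_\theta f_\theta-\nabla_\theta f_{\theta_0}\|_2$ are uniformly $\tilde\cO(B^{3/2}/m^{1/4})$ on $S_B$ and $\tilde\cO(B^{4/3}\log m/m^{1/6})$ on $S_{B,H}$; propagating these through the two sums produces the $B^{5/2}/m^{1/4}$ and $H^6B^3\log^{3/2}m/m^{1/6}$ terms, while the depth-dependent prefactors $H^4,H^{1/2},H^5$ arise from tracking gradient norms and Lipschitz constants across the $H$ layers.

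Third, since $\nabla_\theta F(\theta_t,\omega_t;X_{1,t},X_{2,t})$ is conditionally unbiased given $\cF_{t-1}$, the deviation $\sum_t\langle\Fhnthe-\E[\Fhnthe\mid\cF_{t-1}],\theta_t-\theta^\star\rangle$ is a bounded martingale-difference sequence, so Azuma--Hoeffding delivers the $\cO(aB^{3/2}T^{-1/2}\log^{1/2}(1/\delta))$ concentration term; a symmetric argument for the $\omega$-iterates contributes the dual version, with an extra $\log m$ factor in the multi-layer case coming from the $\cO(B\sqrt{\log m})$ bound on NN outputs at initialization. The main obstacle will be the uniform control of the linearization error across all $T$ iterates and both players while respecting the projection onto $S_B$ (resp.\ $S_{B,H}$): the iterates must be shown to stay inside the NTK ball where the linearization is valid, the bias must be absorbed into the $m$-dependent term rather than amplified by $T$, and the depth dependence in the multi-layer setting must remain polynomial in $H$ rather than exponential. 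This is precisely what forces the stringent scaling on $m$ and $B$ in Assumption \ref{as:nninitmulti} and ultimately dictates the slower $m^{-1/6}$ rate for deep networks.
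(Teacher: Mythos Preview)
Your proposal is essentially correct and follows the same high-level strategy as the paper: reduce $E(\bar f_T)$ to a primal-dual gap via convexity of $\phi$ in $f$ and concavity in $u$ (the paper's Lemma \ref{lm:boundloss}), then control each player's regret by online-learning arguments plus NTK linearization, with Azuma--Hoeffding for the stochastic gradient noise. The rates and depth-dependent prefactors you anticipate match.

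The one stylistic difference worth noting is that the paper does not pass through functional gradients in $L^2(X_1)$ at all. Instead it defines the linearized surrogate $\widehat\phi(\theta,\omega)$ directly in the weights (replacing $f_\theta,u_\omega$ by their first-order expansions at initialization) and observes that $\widehat\phi$ is \emph{convex--concave in $(\theta,\omega)$}. The regret is then split as $[\phi_t(\theta_t)-\widehat\phi_t(\theta_t)]+[\widehat\phi_t(\theta_t)-\widehat\phi_t(\theta)]+[\widehat\phi_t(\theta)-\phi_t(\theta)]$ (equation \eqref{eq:threetermsdecomposition}); the outer two terms are bounded by Lemma \ref{lm:closeness}, and the middle term by a generic ``biased and noisy OCO'' lemma (Lemma \ref{lm:onlinecvxlearning}) applied with bias $\|\nabla_\theta F-\nabla_\theta\widehat F\|$ from Lemma \ref{lm:bdgradsmallbias}. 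Your route through $\langle\nabla_f\phi_t,f_{\theta_t}-f^\star\rangle_{L^2}$ and the NTK feature map is equivalent once you unwind the chain rule, but the paper's weight-space surrogate avoids having to translate between function-space and parameter-space regrets.

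One small correction: the ``obstacle'' you flag about showing the iterates remain in the NTK ball is a non-issue here, since \ref{ppoalgo} explicitly projects onto $S_B$ (resp.\ $S_{B,H}$) at every step; the linearization lemmas hold uniformly on that ball by construction, and the bias is averaged (not amplified) over $T$ by the $\tfrac{1}{T}\sum_t\|\xi_t\|$ term in Lemma \ref{lm:onlinecvxlearning}.
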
 

Each of the error rates in Theorem \ref{thm:globalconvergence} consists of two parts: 
the optimization error and the linear approximation error; 
see Section \ref{sec:proof} for a detailed derivation. For the two-layer case, 
if the total training step $T$ is known in advance, 
the optimal stepsize choice is $\eta \sim T^{-1/2}$, and the resulting error rate 
is $\tilde \cO(T^{-1/2} + m^{-1/4})$. The optimization error term $\cO(T^{-1/2})$ is comparable 
to the rate in \citep{nemirovski2009robust} where stochastic mirror descent method
is used in stochastic saddle-point problems. Importantly, the error bound \eqref{eq:errboundtwolayer} 
converges to zero as $T, m \to \infty$. For the multi-layer case, 
optimizing $\eta$ yields the error rate $\tilde \cO(T^{-1/2} + m^{-1/6} )$; 
the linear approximation error has increased due to the highly non-linear nature of multi-layer NNs. 


\subsection{Consistency} \label{sec:theoryconsist}

If we assume smoothness of the solution $f$ to the operator equation $Af = b$ defined in \eqref{eq:sem} and compactness of the operator $A$, we are able to control the rate of regularization bias. 
For a compact operator $A$, 
let $\{ \lambda_j, \phi_j, \psi_j\}_{j=1}^\infty$ be its singular system \citep{kress1989linear}, i.e., $\{\phi_j\}$ and $\{ \psi_j\}$ are orthonormal sequences in $\cH, \cE$, repectively, $\lambda_j \geq 0$, and satisfy
$A \phi_{j}=\lambda_{j} \psi_{j}\,,  A^{*} \psi_{j}=\lambda_{j} \phi_{j}\,, $
where $A^*$ is the adjoint operator of $A$. 
For any $\beta > 0$, define the $\beta$-regularity space \citep{carrasco2007linear}
\# \label{eq:hilberstscale}
\Phi_{\beta}=\Big\{f \in \mathcal{N}( A)^{\perp} \text { such that } \sum_{j=1}^{\infty} \frac{\left\langle f, \phi_{j}\right\rangle^{2}_{\cH}}{\lambda_{j}^{2 \beta}}<\infty\Big\} \subset \cH.
\#
Equipped with the definition of $\beta$-regularity space, 
we are now ready to state the consistency result for 2-layer NN. 
The proof is presented in Appendix \ref{pf:thm:consisttwolayer}.

\begin{assumption}[Zero approximation error]\label{as:zeroapprox}
The primal problems \eqref{eq:tikhnovov2} and \eqref{eq:primewithnn} yield the same solution.
\end{assumption}
\begin{assumption}[Smoothness of the truth]\label{as:reg}
Assume the operator $A$ defined in \eqref{eq:sem} is injective and compact,
and that $f$, the unique solution to \eqref{eq:sem}, lies in the regularity space $\Phi_\beta$ defined in \eqref{eq:hilberstscale} for some $\beta > 0$. 
\end{assumption}

\begin{theorem}[Consistency, 2-layer NN] \label{thm:consisttwolayer}
Consider the iterates generated by \ref{ppoalgo} with stepsize $\eta \sim (aT)^{-1/2}$,
where $a = \max\{\alpha, 1\}$.
Assume \ref{as:nninit}, \ref{as:bdsupp}, \ref{as:inputdistn}, \ref{as:containtruefunction}, \ref{as:zeroapprox} and \ref{as:reg}. Then with probability at least $1 - \delta$ over the sampling process,
    \# \label{eq:esterror}
    \E_\init \big[ \|\widebar f_T - f \|_{L^2(X_1)}^2 \big] = C \Big( \alpha^{\min\{ \beta, 2\}} + \frac{1}{ \alpha \sqrt{a}} \frac{1}{T^{1/2}} +  \frac{a}{\alpha} \big( \frac{1}{T^{1/2}} + \frac{1}{m^{1/4}}\big) \Big) ,
    \#
where $\widebar f_T$ is defined in \eqref{eq:nnoutput}, $f$ in Assumption \ref{as:reg}, and $C$ is a constant independent of $\beta, \alpha, T$ and $m$.
\end{theorem}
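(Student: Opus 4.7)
The plan is to split $\|\widebar f_T - f\|_{L^2(X_1)}^2$ via the Tikhonov solution $f^\alpha$ of \eqref{eq:tikhnovov2}, using
\[
\|\widebar f_T - f\|_{L^2(X_1)}^2 \leq 2\|\widebar f_T - f^\alpha\|_{\cH}^2 + 2\|f^\alpha - f\|_{\cH}^2,
\]
which is valid since $\cH = L^2(X_1)$. The first summand is an optimization/statistical error that will be controlled by Theorem \ref{thm:globalconvergence}, while the second is the classical Tikhonov regularization bias, controlled using the smoothness Assumption \ref{as:reg}. Assumption \ref{as:zeroapprox} ensures that $f^\alpha$ lies in the NN class, so the minimum $L^*$ of $L$ over $\cF_{\text{NN}}$ equals $L(f^\alpha)$.

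For the first summand I would exploit that $L(f) = \tfrac12\|Af-b\|_\cE^2 + \tfrac\alpha2\|f\|_\cH^2$ is $\alpha$-strongly convex on $\cH$ with minimizer $f^\alpha$, so by first-order optimality
\[
\tfrac{\alpha}{2}\|\widebar f_T - f^\alpha\|_\cH^2 \leq L(\widebar f_T) - L(f^\alpha) = E(\widebar f_T).
\]
Taking $\E_\init$ and invoking the 2-layer bound of Theorem \ref{thm:globalconvergence} with $\eta \sim (aT)^{-1/2}$ (which balances $a\eta B$ against $B/(T\eta)$) gives, after dividing by $\alpha/2$, the $\frac{1}{\alpha\sqrt{a}}T^{-1/2}$ and $\frac{a}{\alpha}(T^{-1/2}+m^{-1/4})$ contributions appearing in \eqref{eq:esterror}. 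The failure probability $\delta$ is inherited directly from Theorem \ref{thm:globalconvergence}.

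For the bias term I would invoke the singular system $\{\lambda_j,\phi_j,\psi_j\}$ of the compact injective operator $A$ from Assumption \ref{as:reg}. Since $Af = b$, the Tikhonov normal equations give $f^\alpha = (\alpha I + A^*A)^{-1} A^*A f$, so
\[
f - f^\alpha = \alpha(\alpha I + A^*A)^{-1} f = \sum_{j} \frac{\alpha\, c_j}{\alpha + \lambda_j^2}\,\phi_j, \qquad c_j \coloneqq \langle f,\phi_j\rangle_\cH.
\]
By Parseval and the $\beta$-regularity $\sum_j c_j^2/\lambda_j^{2\beta} < \infty$,
\[
\|f-f^\alpha\|_\cH^2 = \sum_j \frac{c_j^2}{\lambda_j^{2\beta}}\cdot\frac{\alpha^2\lambda_j^{2\beta}}{(\alpha+\lambda_j^2)^2} \leq \Big(\sup_{\lambda\in[0,\|A\|]}\frac{\alpha^2\lambda^{2\beta}}{(\alpha+\lambda^2)^2}\Big)\sum_j\frac{c_j^2}{\lambda_j^{2\beta}}.
\]
A scalar calculus argument on $t\mapsto \alpha^2 t^{\beta}/(\alpha+t)^2$ (with $t=\lambda^2$) yields the bound $O(\alpha^{\min\{\beta,2\}})$, using that $A$, being a conditional expectation operator, is a contraction ($\|A\|\leq 1$ by Jensen), so the supremum stays finite even when $\beta > 2$.

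The main obstacle is precisely this scalar optimization in the bias analysis, which must be handled in two regimes: for $\beta\leq 2$ the supremum is attained at an interior critical point $\lambda^2\asymp\alpha$, yielding the rate $\alpha^\beta$, while for $\beta>2$ the supremum is attained on the boundary $\lambda\asymp\|A\|$, yielding $\alpha^2$; this is the familiar Tikhonov saturation phenomenon and produces the exponent $\min\{\beta,2\}$. Combining the two summands via the triangle inequality then delivers \eqref{eq:esterror}; the remaining steps are routine plug-ins.
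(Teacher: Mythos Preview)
Your proposal is correct and follows essentially the same route as the paper: the same triangle-inequality split through $f^\alpha$, the same $\alpha$-strong convexity inequality $\tfrac{\alpha}{2}\|\widebar f_T - f^\alpha\|_\cH^2 \leq L(\widebar f_T)-L(f^\alpha)$ combined with Theorem~\ref{thm:globalconvergence} at stepsize $\eta\sim(aT)^{-1/2}$, and the same Tikhonov bias rate $\|f-f^\alpha\|_\cH^2 = O(\alpha^{\min\{\beta,2\}})$. The only cosmetic difference is that the paper proves the strong convexity by an explicit quadratic expansion and outsources the bias bound to the literature (Lemma~\ref{lm:regbias}, citing \citep{carrasco2007linear}), whereas you invoke strong convexity as a one-liner and instead supply the singular-system computation for the bias; both orderings are fine and the substance is identical.
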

If $0< \beta \leq 2$ and $0< \alpha\leq 1$, the optimal choice of $\alpha$ is $\alpha \sim (T^{-1/2}+m^{-1/4})^{1/ ( \beta+1 )}$, assuming $T$ and $m$ are large enough, and the estimation error \eqref{eq:esterror} is of order $\cO \big((T^{-1/2}+m^{-1/4})^{{\beta} / ( \beta+1) } \big)$. To the best of our knowledge, this is the first estimation error rate of structural equation models using NNs. 
We remark \citep{max2018deep} also provides bounds on the estimation error of an NN-based estimator in the setting of semi-parametric inference, but they do not discuss computational issues.

\section{Proof sketch} \label{sec:advsem}

\subsection{Local linearization of NNs}
The key observation is that as the width of NN increases, NN exhibits similar behavior to its linearized version \cite{allen2019learning}. For an NN $f \in \cF_{B,m}$ (or $\cF_{B,H,m}$, with slight notation overload), we denote its linearized version at $W(0)$ by 
\# \label{eq:defNNliear}
\widehat{f}(x,W) = f(x, W(0))+\left\langle\nabla_{W} f \big(x , W(0)\big), W-W(0)\right\rangle. 
\#
The following lemma offers a precise characterization of linearization error for 2-layer NNs; the proof is presented in Section \ref{pf:lm:linearNN} in Appendix \ref{ap:appendix}.  Essentially, it shows that for 2-layer NNs the expected approximation error of the function $f(\,\cdot\,, W)$ by $\widehat f(\,\cdot\,, W)$ decays at the rate $\cO({m}^{-1/2})$, for any $W \in S_B$. In other words, as the width of NN goes to infinity, the NN function behaves like a \textit{linear function}. Similar results on approximation error for multi-layer NNs hold; see Appendix \ref{app:multi}.
\begin{lemma}[Error of local linearization, 2-layer] \label{lm:linearNN}
Consider the 2-layer neural networks in \eqref{eq:twolayernn}. Assume that there exists $c >0$, for any unit vector $v \in \R^d$ and any constant $\zeta >0$, such that $\P_{X}(|v^{\top} X| \leq \zeta) \leq c \zeta$. Under Assumption \ref{as:nninit} we have for all $W \in S_B$ and all $x$,
\$
    & \E_  {\init, X}\big [ | f(X,W) - \widehat f(X,W)|^2 \big] = \cO(B^{3} m^{-1 / 2}), \quad \text{and}
    \\
    & \, \E_{\init, X} \big [ \|\nabla_W f(X,W) - \nabla_W\widehat f(X,W)\|^2 \big] = \cO (B m^{-1 / 2}).
\$
\end{lemma}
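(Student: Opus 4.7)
The plan is to exploit the piecewise-linear structure of ReLU to express both the function and gradient linearization errors as sums over those neurons whose activation pattern flips when moving from $W(0)$ to $W$, and then to bound the expected number of flips using Assumption \ref{as:inputdistn} together with the Gaussian initialization.

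First, using the identity $\sigma(z) = z\, \indi\{z > 0\}$, I would write
\[
f(x, W) = \tfrac{1}{\sqrt m} \sum_{r=1}^m b_r (W_r\tp x)\, \indi\{W_r\tp x > 0\}, \quad \widehat f(x, W) = \tfrac{1}{\sqrt m} \sum_{r=1}^m b_r (W_r\tp x)\, \indi\{W_r(0)\tp x > 0\},
\]
so that the difference is supported on indices with $\xi_r \coloneqq \indi\{\mathrm{sgn}(W_r\tp x) \neq \mathrm{sgn}(W_r(0)\tp x)\} = 1$. A short case analysis shows $|W_r\tp x| \leq |(W_r - W_r(0))\tp x|$ on $\{\xi_r = 1\}$; for the gradient, $\nabla_{W_r} f(x, W) = m^{-1/2} b_r x\, \indi\{W_r\tp x > 0\}$ gives directly $\|\nabla_{W_r} f - \nabla_{W_r} \widehat f\| \leq m^{-1/2} \xi_r \|x\|$. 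Applying Cauchy--Schwarz across $r$ with $\|x\| \leq 1$ and $\sum_r \|W_r - W_r(0)\|^2 \leq B^2$ yields the pointwise bounds
\[
|f(x,W) - \widehat f(x,W)|^2 \leq \tfrac{B^2}{m} \sum_{r=1}^m \xi_r, \qquad \|\nabla_W f(x,W) - \nabla_W \widehat f(x,W)\|^2 \leq \tfrac{1}{m} \sum_{r=1}^m \xi_r.
\]

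Second, I would bound $\E_{\init, X}[\sum_r \xi_r]$. The flip event implies $|W_r(0)\tp X| \leq \|W_r - W_r(0)\|$. Conditioning on $\init$ and applying Assumption \ref{as:inputdistn} with $v = W_r(0)/\|W_r(0)\|$ yields $\P_X(\xi_r = 1 \mid \init) \leq c \|W_r - W_r(0)\|/\|W_r(0)\|$. Summing over $r$, applying Cauchy--Schwarz pointwise in $\init$ together with $\sum_r \|W_r - W_r(0)\|^2 \leq B^2$, and then Jensen's inequality on $\sqrt{\cdot}$,
\[
\E_{\init, X}\Big[\sum_{r=1}^m \xi_r \Big] \leq c B\, \E_\init\Big[\sqrt{\textstyle \sum_r 1/\|W_r(0)\|^2}\Big] \leq c B \sqrt{m\, \E_\init[1/\|W_1(0)\|^2]} = \cO(B\sqrt m),
\]
where the last step uses $\|W_1(0)\|^2 \sim \chi_d^2/d$ so that $\E[1/\|W_1(0)\|^2] = d/(d-2) = \cO(1)$ for $d \geq 3$. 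Substituting back gives the claimed rates $\cO(B^3/\sqrt m)$ and $\cO(B/\sqrt m)$.

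The principal challenge is to keep track of the randomness carefully enough to obtain the sharp sign-flip probability $\cO(\|W_r - W_r(0)\|/\|W_r(0)\|)$ rather than the loose $\cO(B)$ that would come from using $\|W_r - W_r(0)\| \leq B$ uniformly. The latter replacement would produce only an $\cO(B^3)$ bound with no decay in $m$; the $\sqrt m$ speed-up emerges because Cauchy--Schwarz is applied \emph{after} summing the per-neuron estimates, so that $\sum_r \|W_r - W_r(0)\|^2$ collapses to $B^2$ while the extra $\sqrt m$ appears through $\sqrt{\sum_r \E[1/\|W_r(0)\|^2]}$.
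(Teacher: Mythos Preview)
Your proposal is correct and follows essentially the same route as the paper's proof: both reduce the function and gradient errors to the sum of sign-flip indicators $\sum_r \indi\{|W_r(0)^\top X|\leq \|W_r-W_r(0)\|\}$, then bound its expectation via the density assumption, Cauchy--Schwarz across neurons, and the finiteness of $\E_{w\sim N(0,I_d/d)}[1/\|w\|_2^2]$. Your bound $|W_r^\top x|\leq |(W_r-W_r(0))^\top x|$ on the flip event is a slightly more direct version of the paper's $|W_r(0)^\top x|+\|W_r-W_r(0)\|\leq 2\|W_r-W_r(0)\|$, and your explicit note that $d\geq 3$ is needed for $\E[1/\|W_1(0)\|^2]<\infty$ makes precise a point the paper leaves implicit.
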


\subsection{Convergence analysis} \label{sec:proof}
In this section, we discuss techniques used to bound the minimization error via the analysis of the regret, in the case of 2-layer NNs. The same reasoning applies to the maximizing player $\omega$ and extension to multi-layer NN is obvious. The following lemma relates regret and primal error. The proof is presented in Appendix \ref{pf:thm:boundloss}.
\begin{lemma}[A bound on primal error] \label{lm:boundloss}
Consider a sequence of candidates $  \{ (f_t,u_t ) \}_{t=1}^T$ for the minimax problem \eqref{eq:minmaxwithnn} that satisfy the following regret bounds
\#
\frac1T \sum_{t= 1}^T \phi(f_t, u_t) \leq  \min_{f\in \cF_{\text{NN}} } \frac1T \sum_{t= 1}^T \phi(f, u_t)+ \epsilon_f \label{eq:regrettheta}
\,, \quad
\frac1T \sum_{t= 1}^T \phi(f_t, u_t) \geq \max_{u\in \cF_{\text{NN}} } \frac1T \sum_{t= 1}^T \phi(f_t, u) - \epsilon_u \,.
\#
Denote $\widebar f_T = \frac1T \sum_{t = 1}^T f_t\,$. If Assumption \ref{as:containtruefunction} holds, then $E(\widebar f_T) = L(\widebar f_T) - L^*   \leq \epsilon_f + \epsilon_u\,$.
\end{lemma}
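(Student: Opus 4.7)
The proof is a standard saddle-point regret argument tailored to the specific structure of $\phi$. The plan is to sandwich $L(\widebar f_T)$ between $L^\star$ and $L^\star + \epsilon_f + \epsilon_u$ by alternately invoking (i) the identity $\max_u \phi(f,u) = L(f)$, (ii) convexity of $\phi$ in $f$, (iii) the two regret bounds, and (iv) the trivial pointwise inequality $\phi(f,u_t) \leq \max_{u}\phi(f,u)$.

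First I would verify the key identity $\max_{u \in \cF_{\text{NN}}} \phi(f,u) = L(f)$ for every $f \in \cF_{\text{NN}}$. Since the $\frac{\alpha}{2}\E[f(X_1)^2]=\frac{\alpha}{2}\|f\|_{\cH}^2$ term is $u$-free, and
\[
    \E\big[\bigl(f(X_1)-b(X_2)\bigr)u(X_2)\big] = \langle Af - b, u\rangle_{\cE},
\]
the inner problem is $\max_{u}\{\langle Af-b,u\rangle_{\cE} - \tfrac12\|u\|_{\cE}^2\} = \tfrac12\|Af-b\|_{\cE}^2$, attained at $u^\star(\cdot) = (Af)(\cdot) - b(\cdot)$. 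Assumption~\ref{as:containtruefunction} guarantees $u^\star \in \cF_{\text{NN}}$, so the unconstrained and constrained maxima coincide. Next I would note that $\phi(\cdot,u)$ is convex for each fixed $u$: the term $\frac{\alpha}{2}\E[f(X_1)^2]$ is convex in $f$, while the remaining pieces are affine in $f$. Hence $\phi(\widebar f_T, u) \leq \frac{1}{T}\sum_{t=1}^T \phi(f_t, u)$ for every $u$.

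Combining these ingredients, I chain the inequalities as follows:
\[
\begin{aligned}
L(\widebar f_T)
&= \max_{u \in \cF_{\text{NN}}} \phi(\widebar f_T, u)
\leq \max_{u \in \cF_{\text{NN}}} \tfrac{1}{T}\sum_{t=1}^T \phi(f_t, u) \\
&\leq \tfrac{1}{T}\sum_{t=1}^T \phi(f_t, u_t) + \epsilon_u
\leq \min_{f \in \cF_{\text{NN}}} \tfrac{1}{T}\sum_{t=1}^T \phi(f, u_t) + \epsilon_f + \epsilon_u,
\end{aligned}
\]
where the three successive steps use (i) convexity of $\phi(\cdot,u)$, (ii) the $u$-regret bound, (iii) the $f$-regret bound. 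Finally, choosing $f^\star \in \arg\min_{f\in\cF_{\text{NN}}} L(f)$ and using $\phi(f^\star, u_t) \leq \max_u \phi(f^\star, u) = L(f^\star) = L^\star$ termwise gives
\[
    \min_{f \in \cF_{\text{NN}}}\tfrac{1}{T}\sum_{t=1}^T \phi(f, u_t) \leq \tfrac{1}{T}\sum_{t=1}^T \phi(f^\star, u_t) \leq L^\star,
\]
so $E(\widebar f_T) = L(\widebar f_T) - L^\star \leq \epsilon_f + \epsilon_u$.

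There is no serious obstacle here; the only nontrivial checkpoint is justifying the equality $\max_u \phi(f,u) = L(f)$ over $\cF_{\text{NN}}$ rather than over all of $\cE$, which is exactly what Assumption~\ref{as:containtruefunction} is engineered to provide. Everything else—convexity of the quadratic-plus-affine $\phi(\cdot,u)$, and the standard telescoping of the two regret bounds through the saddle—is routine. The same argument transports verbatim to the multi-layer case by replacing $\cF_{B,m}$ with $\cF_{B,H,m}$ throughout.
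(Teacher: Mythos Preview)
Your proof is correct and follows essentially the same saddle-point regret argument as the paper: both use Assumption~\ref{as:containtruefunction} to identify $L(f)=\max_{u\in\cF_{\text{NN}}}\phi(f,u)$, convexity of $\phi(\cdot,u)$ to pass from $\widebar f_T$ to the average, and then the two regret bounds. The only cosmetic difference is that the paper introduces $\widebar u_T$ and invokes concavity of $\phi(f,\cdot)$ to bound $\min_f\phi(f,\widebar u_T)\geq \min_f\tfrac1T\sum_t\phi(f,u_t)$, whereas you sidestep this by the direct termwise bound $\phi(f^\star,u_t)\leq L^\star$; both routes are equally valid and yield the same conclusion.
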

The above lemma suggests we separate our analysis for the two players. For example, to analyze $\epsilon_f$ we can think of the sequence $\{u_t \}$ as fixed and find an upper bound of the quantity $\frac1T \sum_{t= 1}^T \phi(f_t, u_t) - \frac1T \sum_{t= 1}^T \phi(f, u_t)$. We will demonstrate our proof idea via the analysis of $\epsilon_f$; it can easily extend to $\epsilon_u$. 

We focus on the analysis of the minimizer $\theta$ and therefore we denote $\phi_t(\cdot) = \phi(\cdot, \omega_t)$ (defined in \eqref{eq:minmaxwithnn}). 
Also let $\widehat \phi_t(\theta) = \E_{X}[ \widehat u_{\omega_t}  \widehat f_\theta - \widehat u _{\omega_t} b - \frac{1}{2} \widehat u_{\omega_t} ^ 2 + \frac{\alpha}{2} \widehat f_{\theta}^2]$, obtained by replacing $f$ and $u$ in $\phi_t(\cdot)$ with their linearized counterparts defined in \eqref{eq:defNNliear}. The most important property of the linearized surrogate $\widehat \phi _t (\theta)$ is that it is \textit{convex} in $\theta$. To estimate the rate of $\epsilon_f$, we start with the decomposition of regret. For any $\theta \in S_B$, define the regret $\text{Reg}(\theta)=\frac1T \sum_{t= 1}^T \phi_t(\theta_t) - \frac1T \sum_{t= 1}^T \phi_t(\theta)$. Then we have the decomposition
 \refstepcounter{equation} \label{eq:close1}
 \refstepcounter{equation} \label{eq:regret}
 \refstepcounter{equation} \label{eq:close2}
\# \label{eq:threetermsdecomposition}
\text{Reg}(\theta)
 = \underbracket{\frac1T \sum_{t= 1}^T \phi_t(\theta_t) - \frac1T \sum_{t= 1}^T \widehat \phi_t(\theta_t)}_{ \eqref{eq:close1} }
+ \underbracket{\frac1T \sum_{t= 1}^T \widehat \phi_t(\theta_t) - \frac1T \sum_{t= 1}^T \widehat \phi_t(\theta)}_{ \eqref{eq:regret} }
+ \underbracket{\frac1T \sum_{t= 1}^T \widehat \phi_t(\theta) - \frac1T \sum_{t= 1}^T \phi_t(\theta) }_{ \eqref{eq:close2} }.
\#
We bound each term separately. To control the terms \eqref{eq:close1} and \eqref{eq:close2} we use the linearization of NN, which shows that the linearized NN and the original one behave similarly in terms of output and gradient as the width of NN $m$ grows (cf. Lemma \ref{lm:linearNN} and Lemma \ref{lm:multilinearNN}). The term \eqref{eq:regret} is bounded using techniques in convex online learning. The idea is to treat the algorithm designed for solving min-max game associated with $\phi$ as a \textit{biased} primal-dual gradient methods for the one with $\widehat \phi$. We illustrate our techniques in further details in Appendix \ref{app:boundsonregretandapprox}.

\section{Conclusions}
We have derived saddle-point formulation for a class of generalized SEMs and parametrized the players with NNs. We show that the gradient-based primal-dual update enjoys global convergence in the overparametrized regimes ($m \to \infty$), for both 2-layer NNs and multi-layer NNs. Our results shed new light on the theoretical understanding of structural estimation with neural networks.

\section*{Broader Impact}
 In recent years, the impact of machine learning (ML) on economics is already well underway \cite{athey2019generalized, chernozhukov2018double}, and our work serves as a complement to this line of research. On the one hand, machine learning methods such as random forest, support vector machines and neural networks provide great flexibility in modeling, while traditional tools in structural estimation that are well versed in the econometrics community are still primitive, despite recent advances \citep{lewis2018adversarial, hartford2017deep, NIPS2019_8615, max2018deep}. On the other hand, to facilitate ML-base decision making, one must be aware of the distinction between prediction and causal inference. Our method provides an NN-based solution to estimation of generalized SEMs, which encompass a wide range of econometric and causal inference models. However, we remark that in order to apply the method to policy and decision problems, one must pay equal attention to other aspects of the model, such as interpretability, robustness of the estimates, fairness and nondiscrimination, assumptions required for model identification, and the testability of those assumptions. Unthoughtful application of ML methods in an attempt to draw causal conclusions must be avoided for both ML researchers and economists.

\begin{ack}
This work is partially supported by the William S.~Fishman 
Faculty Research Fund at the University of Chicago
Booth School of Business. This work was completed in
part with resources supported by the University of Chicago
Research Computing Center.

We thank Professor Xiaohong Chen at Yale for pointing us to some of the classic works in nonparametric approach to and the use of NN in conditional moment problems, that were omitted in the submission version of this paper.
\end{ack}

\medskip

\bibliographystyle{abbrvnat}
\bibliography{citation.bib}

\clearpage
\begin{appendix} 
\section*{\textsc{Appendices to Provably Efficient Neural Estimation of Structural Equation Model: An Adversarial Approach}}
\section{Examples of generalized structural equation models} \label{sec:semexamples}
In Section \ref{sec:adversarialsem}, we introduce our model in its full generality. Here we specialize it in concrete examples from the causal inference literature and econometrics. 

We remark that the convergence result detailed in Theorem \ref{thm:globalconvergence} applies to all examples while consistency result (Theorem \ref{thm:consisttwolayer}) applies only to Example \ref{ex:iv} because compactness of the conditional expectation operator is required in Theorem \ref{thm:consisttwolayer}.

We add that the paper by \citet[Page 5, Footnote 4]{babii2020iscompleteness} includes a battery economics models that involve conditional moment restrictions, including the measurement error models, dynamic models with unobserved state variables, demand models, neoclassical trade models, models of earnings and consumption dynamics, structural random coefficient models, discrete games, models of two-sided markets, high-dimensional mixed-frequency IV regressions, and functional regression models. We refer readers to the paper for detailed references.

\textbf{Example \ref{ex:iv}, revisited} (\textit{Instrumental Variable Regression}, \citep{newey2003instrumental, hartford2017deep, horowitz2011applied}). \normalfont 
In applied econometrics, endogeneity in regressors usually arises from omitted variables, measurement error, and simultaneity \citep{wooldridge2010econometric}. The method of instrumental variables (IV) provides a general solution to the
problem of an endogenous explanatory variable. Without loss of generality, consider the model of the form
\begin{align}
\tag{\ref{eq:iv} revisited}
Y=g_{0} \left(X\right)+\varepsilon, \quad \E[\varepsilon \mid Z]=0,
\end{align}
where $g_0$ is the unknown function of interest, $Y$ is an observable scalar random variable, $X $ is a vector of explanatory variables, $Z$ is a vector of instrument variables, and $\varepsilon$ is the noise term. For the special case $X = Z$, the estimation of $g_0$ reduces to simple nonparametric regression, since $\E[Y \mid X = x] = g_0(x)$, and can be solved via spline regression or kernel regression \citep{wasserman2006all}. When $X$ is endogenous, which is usually the case in observational data, traditional prediction-based methods fail to estimate $g_0$ consistently. In this case, $g_0(x) \neq \E[Y \mid X = x]$, and prediction and counterfactual prediction become different problems.

To see how the model fits our framework, define the operator $A: L^2(X) \to L^2(Z)$, $Ag = \E[g(X) \mid Z]$. Let $b = \E[Y\mid Z] \in L^2(Z)$. The structural equation \eqref{eq:iv} can be written as $Ag = b$. The minimax problem with penalty level $\alpha$ ($\alpha > 0$) takes the form 
\# \label{eq:ivminmaxwithreg}
\min_{f\in L^2(X)} \max_{u \in L^2(Z)} \E[f(X)u(Z) - Y\cdot u(Z) - \tfrac12u^2(Z) + \tfrac{\alpha}{2} f^2(X)] ,
\#
where the expectation is taken over all random variables.

The IV framework enjoys a long history, especially in economics \cite{greene2003econometric}. It provides a means to answer counterfactual questions like what is the efficacy of a given drug in a given population? What fraction of crimes could have been prevented by a given policy? However, the presence of confounders makes these questions difficult. If $X$ is endogenous, which is usually the case in observational data, then $g_0(x) \neq \E[Y \mid X = x]$, and prediction and counterfactual prediction become different problems. When valid IVs are identified, we have a hope to answer these counterfactual questions.

Counterfactual prediction targets the quantity $\E[Y\mid\text{do}(X = x)]$ defined by the causal graph (see Figure \ref{fig:iv}), where the $\text{do}(\cdot)$ operator indicates that we have intervened to set the value of variable $X$ to $x$ while keeping the distribution of $\varepsilon$ fixed \citep{pearl2009causality}. To facilitate counterfactual prediction, we need to impose stronger conditions on the model \citep{muandet2019dual, hartford2017deep}: (i) relevance: $\P(X \mid Z = z)$ is not constant in $z$; (ii) exclusion: $Y \indep Z \mid X, \varepsilon$; and (iii) unconfounded instrument: $\varepsilon \indep Z$. Figure \ref{fig:iv} encodes such assumptions succinctly.
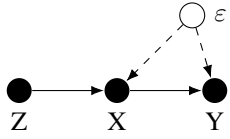
\begin{figure}[ht!]
    \centering
\begin{tikzpicture}
    \node (z) at (-1.3,0) [label=below:Z, circle, fill=black]{};
    \node (x) at (0,0) [label=below:X,circle, fill=black]{};
    \node (y) at (1.3,0) [label=below:Y,circle, fill=black]{};
    \node (u) at (1,1) [label=right:{$\varepsilon$}, circle, draw]{};
    \path (z) edge (x);
    \path (x) edge (y);
    \path (u) [dashed ] edge (x)
              [dashed] edge (y);
\end{tikzpicture}
    \caption{A causal diagram of IV. Three observable variables $X, Y, Z$ (denoted by filled circles) and one unobservable confounding variable $\varepsilon$. There is no direct effect of the instrument $Z$ on the outcome $Y$ except through $X$.}
    \label{fig:iv}
\end{figure}

\textbf{Example \ref{ex:simul}, revisited } (\textit{Simultaneous Equations Models)}. 
Dynamic models of agent’s optimization problems or of interactions among agents often exhibit simultaneity. Demand and supply model is such an example. Let $Q$ and $P$ denote the quantity sold and price of a product. Consider the demand and supply model adapted from \citep{matzkin2008identification}.
\# 
&Q =D\left(P, I\right) + U_1,  \notag
\\ 
&P=S\left(Q, W\right) + U_2, \tag{\ref{eq:demandandsupply} revisited}
\\
&\E[U_1\mid I,W ] = 0, \, \E[U_2 \mid I, W] = 0. \notag
\#
Here $D$ and $S$ are functions of interest, $I$ denotes consumers’ income, $W$ denotes producers’ input prices, $U_1$ denotes an unobservable demand shock, and $U_2$ denotes an unobservable supply shock. Equation \eqref{eq:demandandsupply} is generally the results of equilibrium. Due to simultaneity, there is no hope to recover demand function $D$ by simple nonparametric regression of $Q$ on $P$ and $I$; nor can we recover supply function $S$ by regressing $P$ on $Q$ and $W$. The knowledge of $D$ is essential in predicting the effect of financial policy. For example, let $\tau$ be a percentage tax paid by the purchaser. Then the resulting equilibrium quantity is the solution $\hat Q$ to the equation
\[\hat Q = D\big((1+\tau)(S(\hat Q, I) + U_1), W \big) + U_2.\]

To cast the model \eqref{eq:demandandsupply} to a minimax problem, define the operators 
\$
& A_1 : L^2(P,I)\to L^2(I,W), A_1D = \E[D(P,I)\mid I,W] ,
\\
& A_2 : L^2(Q,W)\to L^2(I,W), A_2S = \E[S(Q,W) \mid I,W].
\$
The resulting minimax problem is
\begin{align*}
    \min_{\substack{D \in L^2(P,I), \\ S\in L^2(Q,W)}} \max_{u_1,u_2 \in L^2(I,W)}
    \left\{
    \substack{
        \E [u_1  (I,W)\cdot(D(P,I) - Q) + u_2(I,W)\cdot (S(Q,W) - P) 
        \\
        \quad \quad \quad - \tfrac{1}{2}u_1(I,W)^2 - \tfrac{1}{2}u_2(I,W)^2 ]
    }
  \right\}.
\end{align*}
Note in this case the operators $A_1$ and $A_2$ are not compact \citep{carrasco2007linear} due to common elements. The min-max derivation remains valid but the stability of the solution is left for future work. 

The causal reading of the simultaneous equations models is an open question since an important assumption often made in causal discovery is that the causal mechanism is acyclic, i.e., that no feedback loops are present in the system \citep{pearl2009causality}. There are efforts in bridging this gap; see, for example, \cite{mooij2011causal}.

\textbf{Example \ref{ex:panel}, revisited} (\textit{Dynamic Panel Data Model}, \citep{su2013nonparametric}). 
\normalfont
Panel data is a common form of econometric data; it contains observations of multiple units measured over multiple time periods. We consider the dynamic model of the following form that includes time-varying regressors, allowing us to investigate
the long-run relationship between economic factors \citep{su2013nonparametric}.
\# 
& Y_{i t}  =m\left(Y_{i, t-1}, X_{i t}\right)+\alpha_{i}+\varepsilon_{i t}, \tag{\ref{eq:paneldatamodel} revisited}
\\
& \E [\varepsilon_{i t} \mid \underline{Y}_{i, t-1}, \underline{X}_{i t} ] =0, \quad i=1, \ldots, N, \quad t=1, \ldots, T, \notag
\#
where $X_{it} $ is a $p \times 1$ vector of regressors, $m$ is the unknown function of interest, $\alpha_i$'s are the unobserved individual-specific fixed effects, potentially correlated with $X_{it}$, and $\varepsilon_{it}$'s are idiosyncratic errors. $\underline{X}_{i t} \coloneqq (X_{i t} \tp, \ldots, X_{i 1} \tp )\tp$ and $\underline{Y}_{i, t-1} \coloneqq \left(Y_{i, t-1}, \ldots, Y_{i 1}\right) \tp$ are the history of individual $i$ up to time $t$. We assume that
$(Y_{it}, X_{it}, \varepsilon_{it})$ are i.i.d. along the individual dimension $i$ but may not be strictly stationary along the time dimension $t$. Clearly, for a large $t$ the conditional set $\{\underline{Y}_{i, t-1}, \underline{X}_{i t} \}$ contains a large number of valid instruments. We do not pursue a search for an efficient choice of IVs in the paper.

To see how it relates to model \eqref{eq:sem}, we consider the first-differenced model
\# 
& \Delta Y_{i t}=m\left(U_{i,t-1}\right)-m\left(U_{i,t-2}\right)+\Delta \varepsilon_{i t}, \label{eq:paneldifferencedvar}
\\
& \E  [ \Delta \varepsilon_{it} \mid U_{i,t-2} ] = 0, \quad i=1, \ldots, N, \quad t=3, \ldots, T, \label{eq:paneldifferenced}
\#
where $\Delta Y_{i t} \coloneqq Y_{i t}-Y_{i, t-1}$, $U_{i, t-2} \coloneqq [Y_{i, t-2}, X_{i, t-1}^{\top}]^{\top}$ and $\Delta \varepsilon_{i t} \coloneqq \varepsilon_{i t}-\varepsilon_{i, t-1}$. The conditional expectation \eqref{eq:paneldifferenced} is obtained by applying law of iterated expectation to \eqref{eq:paneldatamodel} conditional on $U_{i,t-2}$. Model \eqref{eq:paneldifferencedvar} cannot be solved via traditional nonparametric regression because $\Delta \varepsilon_{it}$ is generally correlated with $Y_{i,t-1}$ on the RHS of $\eqref{eq:paneldifferencedvar}$. 

Now we cast the model \eqref{eq:paneldifferenced} into a minimax problem. For ease of exposition we assume strict stationarity on the sequence $\{ U_{it}\}$, which implies that the marginal distribution of $U_{i,t-1}$ and the transition distribution $p(U_{i,t-1} | U_{i,t-2})$ are time-invariant. Now we define a random vector
$(D',E',D,E,F, \varepsilon ) =_d ( Y_{i, t-1}, X_{it}, Y_{i,t-2}, X_{i, t-1}, \Delta Y_{it},  \Delta\varepsilon_{it})$, and the definition is valid due to stationarity. Equation \eqref{eq:paneldifferenced} can be rewritten as
\$
\E [F - m(D',E') + m(D,E) \mid E, D] = 0.
\$
Define the operator $A: L^2(D',E')\to L^2(E,D)$, $Am = \E[m(D',E') \mid E,D]$ and the function $b = \E[F\mid E,D]$. Equation \eqref{eq:paneldifferenced} becomes $(A - I)m = b$, which is a Frehdolm equation of type II. The key difference between type I and type II Fredholm equations lies in stability of the solution. If $I-K: \cH \to \cH$ is injective, then it is surjective, the inverse operator $(I-K)^{-1}$ is continuous and therefore the solution to type II equation is stable \citep{kress1989linear}.

We remark that $1$ is the greatest eigenvalue of $A$ because $(D',E')$ and $(D,E)$ are identically distributed. Therefore we assume the multiplicity of 1 is one in order to identify $m$ up to a constant. The resulting min-max problem is 
\[\min_{m\in L^2(D',E')} \max_{u \in  L^2(E,D)} \E[u(E,D) \cdot \big (F - m(D',E') + m(D,E)  \big) - \tfrac{1}{2}u(E,D)^2].\]
In the absence of the lagged term $Y_{i,t-1}$ on the RHS of \eqref{eq:paneldatamodel}, the model \eqref{eq:paneldatamodel} reduces to the nonparametric panel data model \citep{henderson2008nonparametric},
\$
Y_{i t}=m\left(X_{i t}\right)+\alpha_{i}+\varepsilon_{i t}, \quad i=1, \ldots, n, \quad t=1, \ldots, T.
\$
If the lag term does not appear, we recover the measurement error model studied in \citep{carrasco2007linear}.


\begin{example}[Euler Equation and Utility, \citep{escanciano2015nonparametric}] \label{ex:ccap} \normalfont
In economic models, the behavior of an optimizing agent can be characterized by Euler equations \citep{hansen1982generalized}. Consumption-based capital asset pricing model (CCAPM) is such an example. Here we consider a simplified setting of \citep{escanciano2015nonparametric} where at time $t$ an agent receives  income $W_t$ and purchases or sells certain units of an asset at price $P_t$. For simplicity we assume there is only one asset on the market. Let $U$ be a time-invariant utility function, and $b\in(0,1)$ be the discount factor. $U$ and $b$ are parameters of interests known to the agent but unknown to the researchers. The stream of consumption $\{C_t\}$ is the solution to the optimization problem
\# \label{eq:ccapm}
& \max_{ \{C_t , Q_t \}_{t=0}^\infty} \E \bigg[\sum_{t=0}^\infty \beta^t U(C_t) \bigg]
\\
& \text{s.t.  } C_{t}+ P_{ t} Q_{t}=  P_{ t} Q_{ t-1}+W_{t}, \label{eq:eulerconstraint}
\#
where $Q_t$ is the quantity of the asset owned by the agent at time $t$. RHS of the constraint \eqref{eq:eulerconstraint} is the total value owned by the agent before the exchange at time $t$, while the LHS represents the total value after the exchange. The agent manipulates his consumption, $C_t$, and the quantity of the asset he holds, $Q_t$, to maximize his expected long-run discounted utility.

Define $R_t = P_{t+1} / P_t$. Using the method of Lagrange multiplier, one can obtain the optimality condition of \eqref{eq:ccapm}
\# \label{eq:optimalityconditionccapm}
\E \left[R_{ t+1} \beta \frac{U^{\prime}\left(C_{t+1}\right)}{U^{\prime}\left(C_{t}\right)}-1 \mid I_{t}\right]=0,
\#
where $I_t$ represents the information available at time $t$. A derivation can be found in \citep{escanciano2015nonparametric}.
Let $g = U'$ be the marginal utility function. Conditioning on $C_t$ in \eqref{eq:optimalityconditionccapm}, we obtain 
\# \label{eq:ccapmoperatereq}
\E[\beta R_{t+1} g(C_{t+1}) \mid C_t] = g(C_t).
\#

The goal to estimate the function $g$ given $\{ C_t, R_{t+1}, C_{t+1}\}$. To see how our min-max derivation applies, define the operator $A: L^2(C_{t+1}) \to L^2(C_t)$, $(Ag)(c) = \E[g(C_{t+1}) R_{t+1} \mid C_t = c]$. We assume $A$ is well-defined. Then \eqref{eq:optimalityconditionccapm} can be succinctly written as
\[ \beta Ag = g.\]
We remark that $g$ is identified up to an arbitrary sign and scale normalization; \citep{escanciano2015nonparametric} provides a detailed discussion on identification. Assuming $\beta$ is known, the resulting min-max problem is 
\[
\min_{g\in L^2(C_{t+1})} \max_{u\in L^2(C_t)} \E \big[ \beta g(C_{t+1}) R_{t+1} u(C_t) - g(C_t) u(C_t) - \tfrac{1}{2} u^2(C_t) \big].
\]
One caveat is that $g = 0$ is a trivial solution to \eqref{eq:ccapmoperatereq} and therefore during the training of NNs we should avoid such a solution. The empirical performance of \ref{ppoalgo} in this example is left for future work.

\end{example}

\begin{example}[Proxy Variables of an Unmeasured Confounder, \citep{miao2018identifying}] \label{ex:proxyvar} \normalfont Consider the causal DAG in Figure \ref{fig:proxyvar} in the sense of \citet{pearl2009causality}. Here $X$ and $Y$ denote the treatment and the outcome, respectively. The confounder $U$ is unobserved, while its proxies $Z$ and $W$ are observed. Assume $U, W, Z$ are continuous and in the discussion we assume $X$ and $Y$ are fixed at $(x,y)$. The conditional independence encoded in Figure \ref{fig:proxyvar} is $W \indep(Z, X)\mid U$ and $Z \indep Y \mid (U, X)$. Using the $\text{do}$-operator of \citet{pearl2009causality}, the causal effect of $X$ on $Y$ is 
\[ p ( y | \operatorname{do}(x) )=\int p (y | x, u) p (u) du,\]
where $p(\cdot)$ stands for probability mass functions of a discrete variable or the probability density function for a continuous variable. However, $U$ is unobserved so we cannot directly calculate the causal effect.

The work of \citet{miao2018identifying} provides an identification strategy for the causal effect of $X$ on $Y$ with the help of the confounder proxies $Z$ and $W$. Consider the solution $h(w, x, y)$ to the following integral solution: for all $(x,y)$ and for all $z$, 
\# \label{eq:existenceinproxy} p (y | z, x)=\int_{-\infty}^{+\infty} h(w, x, y) p(w | z, x) d w,\#
which is a Fredholm integral equation of the first kind.

\begin{lemma}[Theorem 1 of \citep{miao2018identifying}]\label{lm:proxyvar} Assume the causal DAG in Figure \ref{fig:proxyvar} and that a solution to \eqref{eq:existenceinproxy} exists. Assume the following completeness condition: $\E [ g(U) | Z,X ]=0$  almost surely if and only if $g(u)=0$ almost surely. Then $p ( y | \operatorname{do}(x) )=\int_{-\infty}^{+\infty} h(w, x, y) p(w) d w$.
\end{lemma}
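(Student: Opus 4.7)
The plan is to combine Pearl's truncation (back-door) formula with the two conditional independences encoded in Figure \ref{fig:proxyvar} and then invoke the completeness assumption as a uniqueness tool. First I would write the causal effect via the standard \textit{do}-calculus identity
\[ p(y\mid \operatorname{do}(x)) = \int p(y\mid x,u)\,p(u)\,du, \]
which is what we need to match against $\int h(w,x,y)\,p(w)\,du$. So the goal reduces to showing that for almost every $u$,
\[ p(y\mid x,u) = \int h(w,x,y)\,p(w\mid u)\,dw, \tag{$\star$} \]
because integrating $(\star)$ against $p(u)$ and swapping the order of integration (legal by Fubini if $h$ is assumed sufficiently integrable, or by nonnegativity when $h$ can be chosen nonnegative) gives $\int h(w,x,y)\,p(w)\,dw$ on the right-hand side.

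To establish $(\star)$, I would rewrite both sides of the defining integral equation $p(y\mid z,x)=\int h(w,x,y)\,p(w\mid z,x)\,dw$ by conditioning on $U$. Using $Z\indep Y\mid (U,X)$ from the DAG,
\[ p(y\mid z,x) = \int p(y\mid u,x)\,p(u\mid z,x)\,du, \]
and using $W\indep(Z,X)\mid U$,
\[ p(w\mid z,x) = \int p(w\mid u)\,p(u\mid z,x)\,du. \]
Substituting the second expression into the defining equation and exchanging the order of integration, the right-hand side becomes $\int \bigl[\int h(w,x,y)\,p(w\mid u)\,dw\bigr]\,p(u\mid z,x)\,du$. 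Equating with the first expression yields, for every $z$ (and each fixed $x,y$),
\[ \int \Bigl( p(y\mid u,x) - \int h(w,x,y)\,p(w\mid u)\,dw \Bigr)\,p(u\mid z,x)\,du = 0, \]
i.e., $\E[g_{x,y}(U)\mid Z=z, X=x]=0$ almost surely in $z$, where $g_{x,y}(u)\coloneqq p(y\mid u,x)-\int h(w,x,y)\,p(w\mid u)\,dw$.

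Now I would apply the completeness hypothesis: since $\E[g_{x,y}(U)\mid Z,X]=0$ almost surely, completeness forces $g_{x,y}(u)=0$ almost surely in $u$, which is precisely $(\star)$. Finally, integrate $(\star)$ against $p(u)$, swap integrals, and use $\int p(w\mid u)\,p(u)\,du = p(w)$ to conclude $p(y\mid\operatorname{do}(x))=\int h(w,x,y)\,p(w)\,dw$. The main obstacle is not really conceptual but bookkeeping: one must be careful that the conditional independences from the DAG are applied in the correct direction (in particular $Z\indep Y\mid (U,X)$ must be used in the \emph{conditional} form $p(y\mid u,z,x)=p(y\mid u,x)$, not in the unconditional $p(y\mid z,x)=p(y\mid x)$), and that the integrals in the substitutions can be interchanged, which is the standard Fubini/regularity proviso implicit in the statement. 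Once those measure-theoretic details are in place, the proof is essentially a three-line chain: truncation formula, DAG-induced rewriting of the integral equation, completeness, and integrate out $u$.
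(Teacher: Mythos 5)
The paper does not actually prove this lemma: it is imported verbatim as Theorem 1 of the cited reference \citep{miao2018identifying}, with no in-paper argument to compare against. Your reconstruction is correct and is essentially the proof given in that reference: back-door adjustment over $U$, rewriting both sides of the defining integral equation \eqref{eq:existenceinproxy} by conditioning on $U$ via the two DAG-induced independences $W \indep (Z,X)\mid U$ and $Z \indep Y \mid (U,X)$, completeness to force the integrand $g_{x,y}$ to vanish, and finally integrating $(\star)$ against $p(u)$. The one point worth flagging is that the function $g_{x,y}$ to which you apply completeness depends on $x$, whereas the hypothesis as stated is a joint almost-sure statement in $(Z,X)$ for a single function $g$ of $U$ alone; what the argument actually needs (and what the original reference uses) is completeness of the conditional law of $U$ given $Z$ at each fixed $X=x$, i.e., $\int g(u)\,p(u\mid z,x)\,du=0$ for a.e.\ $z$ implies $g=0$ a.e., for each $x$. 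Reading the hypothesis that way, your chain of steps is exactly the standard argument, and the Fubini/regularity provisos you note are the only remaining bookkeeping.
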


The result suggests that one can identify the causal effect by first solving for $h$ in \eqref{eq:existenceinproxy} and then applying Lemma \ref{lm:proxyvar}, since $p(y|z,x)$, $p(w|z,x)$ and $p(w)$ can be estimated from the data. To see how \eqref{eq:existenceinproxy} fits into our framework, we note that Equation \eqref{eq:existenceinproxy} implies $\E[\indi\{ Y = y\} \mid Z, X] = \E[h(W,X,y) \mid Z, X]$ for all $y$, and thus similar min-max problem derivation goes through. However, in \citep{miao2018identifying} the identification strategy is limited to the case where $X$ and $Y$ are categorical, and it would be interesting to see how our method performs in the setting of continuous treatment and continuous outcome.

\begin{figure}[ht!]
    \centering
\begin{tikzpicture}
    \node (x) at (-1,0) [label=below:X, circle, fill=black]{};
    \node (y) at (1,0) [label=below:Y,circle, fill=black]{};
    \node (w) at (1.3,1) [label=above:W,circle, fill=black]{};
    \node (u) at (0,1) [label=above:U, circle, draw]{};
    \node (z) at (-1.3,1) [label=above:Z, circle, fill=black]{};

    \path (u) edge (z);
    \path (u) edge (w);
    \path (u) edge (y);
    \path (u) edge (x);
    
    \path (z) edge (x);
    \path (x) edge (y);
    \path (w) edge (y);
\end{tikzpicture}
    \caption{A causal graph of confounder proxies. Adapted from Figure 1(f) of \citep{miao2018identifying}.}
    \label{fig:proxyvar}
\end{figure}
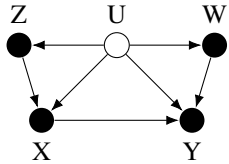

\end{example}

\section{Linear approximation error of multi-layer NNs} \label{app:multi}
Without assumptions on the distribution of data (Assumption \ref{as:inputdistn}), we have slightly worse upper bounds on the error of linearization for multi-layer NNs.
\begin{lemma}[Error of local linearization, multi-layer, \citep{allen2019learning, gao2019convergence}] \label{lm:multilinearNN}
Consider the multi-layer neural networks described in \eqref{eq:multilayernn}. Under Assumption \ref{as:nninitmulti}, with probability at least $1-\exp (-\Omega(\log ^{2} m))$ with respect to the random initialization, for any $W \in S_{B,H}$ and all $x$ such that $\| x \| = 1$,
\begin{enumerate}
    \item $ | \widehat f(x,W) | = \cO(B H^{3 / 2} \log m)$, 
    \item $  \| \nabla_W f(x,W) \|  = \cO(H)$,
    \item $ | f(x,W) - \widehat f(x,W)| = \cO(B^{4 / 3} m^{-1 / 6} H^{3} \log ^{1 / 2} m)$, and
    \item $ \|\nabla_W f(x,W) - \nabla_W\widehat f(x,W)\| = \cO (B^{1 / 3} m^{-1 / 6} H^{5 / 2} \log ^{1 / 2} m)$.
\end{enumerate}
\end{lemma}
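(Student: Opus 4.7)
The plan is to import and adapt the multi-layer linearization machinery developed in \citep{allen2019learning, gao2019convergence}. Because the statement is essentially a compilation of four standard overparametrized ReLU-net estimates under the initialization in Assumption \ref{as:nninitmulti}, I would structure the proof as a sequence of four self-contained bounds, each invoking a concentration lemma at initialization together with a Lipschitz-in-$W$ perturbation argument valid on $S_{B,H}$.

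First, I would establish the forward-pass regularity at initialization. Using the fact that each entry of $A$ and $W^{(h)}(0)$ is $\cN(0,2)$ and that ReLU preserves $2/m$ scaling, a standard induction on $h$ gives $\|x^{(h)}(W(0))\| = \Theta(1)$ uniformly in $h$ with probability at least $1-\exp(-\Omega(\log^2 m))$, together with operator-norm control of intermediate Jacobians. This is exactly the content of \citep[Lem.~7.1, Lem.~8.2]{allen2019convergence}. From here, item (2) follows directly by bounding $\|\nabla_W f(x,W)\|$ layer-by-layer using a telescoping of the Jacobians; the scaling $1/\sqrt{m}$ in \eqref{eq:multilayernn} cancels against the $\cN(0,2)$ variance so that each per-layer contribution is $\cO(1)$, totaling $\cO(H)$. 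Item (1) is then immediate: since $\widehat f(x,W) = f(x,W(0)) + \langle \nabla_W f(x,W(0)), W-W(0)\rangle$, Cauchy-Schwarz combined with (2) and $\|W-W(0)\|_F \le B\sqrt{H}$ gives the advertised $BH^{3/2}\log m$ bound, with the $\log m$ factor coming from the sub-Gaussian tail on $b^\top x^{(H)}(W(0))$.

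For items (3) and (4), the core step is the \emph{activation-pattern stability} lemma: when $W$ stays within $S_{B,H}$, the fraction of ReLU units whose sign flips relative to initialization at each layer is $\tilde\cO(B^{2/3} m^{-1/3} H)$ per hidden neuron \citep[Lem.~8.2]{allen2019learning}. Let $D^{(h)}(W)$ be the diagonal sign matrix of layer $h$. I would write
\[
f(x,W) - \widehat f(x,W) = \int_0^1 \bigl\langle \nabla_W f(x, W(0)+s(W-W(0))) - \nabla_W f(x,W(0)),\, W-W(0)\bigr\rangle\,ds,
\]
and bound the integrand by decomposing the Jacobian difference into contributions from (a) the changing sign pattern $D^{(h)}(W')-D^{(h)}(W(0))$ and (b) the perturbation of the pre-activations along a frozen pattern. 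Term (a) is controlled by the activation-flip count; term (b) by standard matrix-product perturbation bounds. After tracking the $H$ and $B$ powers through the layers, this yields the claimed $B^{4/3}m^{-1/6}H^3\log^{1/2}m$ rate in (3) and, by essentially the same argument applied to the gradient rather than the value, the $B^{1/3}m^{-1/6}H^{5/2}\log^{1/2}m$ rate in (4).

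The main obstacle is bookkeeping: tracking the exponents of $B$, $H$, $m$, and $\log m$ through the layer-by-layer induction so that the final exponents match the stated orders, and arranging the high-probability events for all layers and both forward and backward passes to intersect on a single event of probability $1-\exp(-\Omega(\log^2 m))$. The scaling requirements in Assumption \ref{as:nninitmulti} ($m \gtrsim d^{3/2}B^{-1}H^{-3/2}\log^{3/2}(\cdot)$ and $B \lesssim m^{1/2}H^{-6}\log^{-3}m$) are chosen precisely so that the flip-fraction bound remains subconstant and the linearization error is controlled; I would verify that the exponents produced by the induction are consistent with these thresholds, rather than attempt to reprove the underlying concentration estimates, which are taken verbatim from \citep{allen2019learning, gao2019convergence}.
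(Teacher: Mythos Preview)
Your proposal is correct and, in fact, far more detailed than the paper's own proof: the paper treats this lemma as a pure import, simply pointing to \citep{allen2019convergence, gao2019convergence} and to specific results in Appendix~F of \citep{cai2019neural} (equation~F.10 for claim~1, Lemma~F.1 for claims~2 and~4, Lemma~F.2 for claim~3) without reproducing any of the layerwise induction or activation-stability arguments you sketch. So your approach is essentially the same---defer to the overparametrized ReLU linearization literature---but you have unpacked the machinery that the paper leaves entirely inside the citations.
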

\begin{proof}
See Section \ref{pf:lm:multilinearNN} in Appendix \ref{ap:appendix}. 
\end{proof}

\section{Bounds on the terms \eqref{eq:close1}, \eqref{eq:regret} and \eqref{eq:close2}} \label{app:boundsonregretandapprox}

\subsection{Bounds on the terms \eqref{eq:close1}, \eqref{eq:close2}}

First, we establish the closeness between the original function $\phi$ and the one consists of linearized NNs, $\widehat \phi$. The following lemma shows that $\widehat \phi$ is a good surrogate for $\phi$ in the sense that the approximation error is of order $\cO(aB^{5/2}m^{-1/4})$, which vanishes as $m \to \infty$.

Denote $F(\theta, \omega; X_1, X_2) = u_\omega  f_\theta - u_\omega b - \frac12 u_\omega^2 + \tfrac{\alpha}{2} f_\theta^2$. Note $\E_{X} [F(\theta, \omega; X_1, X_2)] = \phi(\theta,\omega)$. Similarly we define $\widehat F(\theta,\omega; X_1,X_2) =\widehat u_{\omega}  \widehat f_\theta - \widehat u _{\omega} b - \frac{1}{2} \widehat u_{\omega}^2 + \tfrac{\alpha}{2}  \widehat f_{\theta} ^2$. 

\begin{lemma}[Closeness between $\widehat \phi$ and $\phi$] \label{lm:closeness}
Let $a = \max \{1,\alpha \} $. For any $\theta, \omega \in S_B$, we have \[\E_{\init} \big [ | \widehat \phi(\theta, \omega) - \phi(\theta, \omega) | \big ] = \cO \big(a B^{5/2}m^{-1/4}\big).\]
\end{lemma}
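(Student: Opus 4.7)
The plan is to write $\widehat\phi(\theta,\omega) - \phi(\theta,\omega)$ as a sum of four differences that each involve replacing $f_\theta$ or $u_\omega$ by its linearization, and then bound each piece by Cauchy--Schwarz combined with Lemma \ref{lm:linearNN}. Concretely, I would use the decomposition
\begin{align*}
\widehat\phi - \phi
&= \E_X\!\big[(\widehat u_\omega - u_\omega)\widehat f_\theta\big]
  + \E_X\!\big[u_\omega(\widehat f_\theta - f_\theta)\big]
  - \E_X\!\big[b\,(\widehat u_\omega - u_\omega)\big] \\
&\quad\; - \tfrac12 \E_X\!\big[(\widehat u_\omega - u_\omega)(\widehat u_\omega + u_\omega)\big]
  + \tfrac{\alpha}{2}\E_X\!\big[(\widehat f_\theta - f_\theta)(\widehat f_\theta + f_\theta)\big],
\end{align*}
take absolute values, and take $\E_{\init}[\,\cdot\,]$ outside.

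\textbf{Key steps.} First I would establish uniform moment bounds of the form $\E_{\init,X}[f_\theta^2]=\cO(B^2)$, $\E_{\init,X}[\widehat f_\theta^{\,2}]=\cO(B^2)$, and similarly for $u_\omega,\widehat u_\omega$, using the fact that functions in $\cF_{B,m}$ are bounded by $\cO(B)$ (noted in the paragraph after \eqref{eq:twolayernn}) and the triangle inequality $|\widehat f_\theta| \le |f_\theta| + |f_\theta - \widehat f_\theta|$ with Lemma \ref{lm:linearNN}. Assumption \ref{as:bdsupp} gives $\E[b(X_2)^2]=\cO(1)$. Next, for each of the five pieces above I would apply Cauchy--Schwarz in the form $|\E_X[g\cdot h]| \le (\E_X g^2)^{1/2}(\E_X h^2)^{1/2}$ and then Jensen's inequality in $\E_{\init}$, yielding bounds like
\[
\E_{\init}\bigl|\E_X[u_\omega(\widehat f_\theta - f_\theta)]\bigr|
\le \bigl(\E_{\init,X}[u_\omega^2]\bigr)^{1/2}\bigl(\E_{\init,X}[(\widehat f_\theta - f_\theta)^2]\bigr)^{1/2}
= \cO(B)\cdot \cO(B^{3/2}m^{-1/4}) = \cO(B^{5/2}m^{-1/4}).
\]
The $b$ and $(\widehat u-u)(\widehat u+u)$ terms are handled identically, and the $\alpha f^2$ term produces the factor $\alpha$ which combines with the $1$'s from the bilinear terms into $a=\max\{\alpha,1\}$.

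\textbf{Main obstacle.} The only delicate step is ensuring the $\cO(B)$ pointwise/$L^2$ bound on $f_\theta$ and $u_\omega$ is actually valid under the stated initialization; this requires checking that $|f(x,W(0))|$ is $\cO(1)$ in expectation over initialization (using $b_r\in\{\pm 1\}$ and $W_r(0)\sim \mathcal N(0,\tfrac1d I_d)$, together with $\|x\|\le 1$) and that the linear correction $\langle \nabla_W f(x,W(0)), W-W(0)\rangle$ contributes at most $\cO(B)$ via $\|\nabla_W f(x,W(0))\|=\cO(1)$ and $\|W-W(0)\|\le B$. Once these moment bounds are in hand, the rest is a routine Cauchy--Schwarz combined with Lemma \ref{lm:linearNN}, and the exponent $B^{5/2}m^{-1/4}$ appears as the product $B\cdot B^{3/2}m^{-1/4}$ exactly as required.
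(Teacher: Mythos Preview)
Your proposal is correct and matches the paper's own proof essentially step for step: the paper also splits $\widehat F - F$ into the cross term $\widehat u\widehat f - uf$, the $b(\widehat u-u)$ term, and the two quadratic terms, then bounds each via Cauchy--Schwarz together with Lemma \ref{lm:linearNN} to obtain $\cO((1+\alpha)B^{5/2}m^{-1/4})$. If anything, you are slightly more careful than the paper in explicitly justifying the $\cO(B^2)$ second-moment bounds on $f_\theta,u_\omega,\widehat f_\theta,\widehat u_\omega$, which the paper simply asserts.
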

\begin{proof}
See Section \ref{pf:lm:closeness} in Appendix \ref{ap:appendix}. The proof relies on the decay rates of approximation error, as detailed in Lemma \ref{lm:linearNN}.
\end{proof}
Lemma \ref{lm:closeness} suggests it suffices to set 
\# \label{eq:epsilonfhalf}
\epsilon_f = \cO(aB^{5/2}m^{-1/4}) + \max_\theta \Big(\frac1T \sum_{t= 1}^T \widehat \phi_t(\theta_t) - \frac1T \sum_{t= 1}^T \widehat \phi_t(\theta) \Big).\#
We now turn to bound the term \eqref{eq:regret} using techniques adapted from convex online learning analysis.

\subsection{A bound on the term \eqref{eq:regret}}
We emphasize we apply online learning analysis (Lemma \ref{lm:onlinecvxlearning}) to the regret associated with $\widehat \phi_t$'s but using updates designed for $\phi_t$'s.

\begin{lemma}[Online convex learning with noisy and biased gradient] \label{lm:onlinecvxlearning}
Given a sequence of convex functions on a convex space $\Theta$, $f_1,f_2,\cdots:$ $\Theta \rightarrow \R$, consider the projected gradient descent updates
\# 
\theta_{t+1} &= \Pi_\Theta \big( \theta_{t}-\eta\left(\zeta_{t}+\xi_{t}\right) \big),
\#
where $\mathbb{E}\left[\zeta_{t} | \theta_{t}\right]=\nabla f_{t}\left(\theta_{t}\right)$, $\Pi_\Theta(\theta) \in \operatorname{argmax}_{\theta' \in \Theta} \|\theta - \theta' \|$ is the projection map to $\Theta$. Assume $\sup _{t}\left\|\zeta_{t}+\xi_{t}\right\|<K$ a.s. and $\sup _{\theta} \|\theta\|<M$. Then with probability at least $1-\delta$,
\# \label{eq:041900521}
\frac{1}{T} \sum_{t=1}^{T} f_{t}\left(\theta_{t}\right)-\frac{1}{T} \sum_{t=1}^{T} f_{t}(\theta) \leq \frac{\eta K}{2}+\frac{M}{T \eta}+8 K \sqrt{\frac{M \ln (1 / \delta)}{T}}+\frac{2 \sqrt{2 M}}{T} \sum_{t=1}^{T}\left\|\xi_{t}\right\|
\# for all $\theta \in \Theta$.
\end{lemma}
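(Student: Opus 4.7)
The statement is a standard regret bound for projected online gradient descent whose oracle has both a zero-mean stochastic component $\zeta_t-\nabla f_t(\theta_t)$ and a deterministic bias $\xi_t$. I would follow the classical three-term Zinkevich-style analysis, augmented with a high-probability concentration step for the noise part.

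\emph{Step 1 (one-step inequality and telescoping).} Using nonexpansiveness of the projection onto the convex set $\Theta$, for every $\theta\in\Theta$,
\[
\|\theta_{t+1}-\theta\|^{2}\le\|\theta_{t}-\theta\|^{2}-2\eta\langle\zeta_{t}+\xi_{t},\theta_{t}-\theta\rangle+\eta^{2}\|\zeta_{t}+\xi_{t}\|^{2}.
\]
Rearranging and invoking the uniform bound $\|\zeta_t+\xi_t\|\le K$ yields
\[
\langle\zeta_{t}+\xi_{t},\theta_{t}-\theta\rangle\le\tfrac{1}{2\eta}\bigl(\|\theta_{t}-\theta\|^{2}-\|\theta_{t+1}-\theta\|^{2}\bigr)+\tfrac{\eta K}{2}\cdot K.
\]
Summing over $t=1,\dots,T$, telescoping, and bounding $\|\theta_{1}-\theta\|^{2}$ by the diameter $(2M)^{2}$ of $\Theta$ gives, after division by $T$, an upper bound of the form $\tfrac{M}{T\eta}+\tfrac{\eta K}{2}$ on $\tfrac{1}{T}\sum_{t}\langle\zeta_{t}+\xi_{t},\theta_{t}-\theta\rangle$; this produces the first two terms of \eqref{eq:041900521}.

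\emph{Step 2 (convex decomposition and bias term).} I would split
\[
\langle\zeta_{t}+\xi_{t},\theta_{t}-\theta\rangle=\langle\nabla f_{t}(\theta_{t}),\theta_{t}-\theta\rangle+\underbrace{\langle\zeta_{t}-\nabla f_{t}(\theta_{t}),\theta_{t}-\theta\rangle}_{=:D_{t}}+\langle\xi_{t},\theta_{t}-\theta\rangle.
\]
Convexity of $f_{t}$ converts the first term into a lower bound on $f_{t}(\theta_{t})-f_{t}(\theta)$, i.e., into regret. The last term is handled by Cauchy-Schwarz and the diameter bound $\|\theta_{t}-\theta\|\le\sqrt{2M}\cdot\sqrt{2M}$, producing, after summation and averaging, the bias contribution $\tfrac{2\sqrt{2M}}{T}\sum_{t}\|\xi_{t}\|$ appearing on the right-hand side (with any loose factors absorbed into the stated constant).

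\emph{Step 3 (martingale concentration).} Let $\mathcal{F}_t$ be the filtration generated by $\{\zeta_s,\xi_s\}_{s\le t-1}$, so that $\theta_t$ is $\mathcal{F}_t$-measurable and $\mathbb{E}[D_t\mid\mathcal{F}_t]=0$ by the unbiasedness hypothesis $\mathbb{E}[\zeta_t\mid\theta_t]=\nabla f_t(\theta_t)$. Hence $\{D_t\}$ is a martingale-difference sequence with a.s.\ bound $|D_t|\le\|\zeta_t-\nabla f_t(\theta_t)\|\cdot\|\theta_t-\theta\|\lesssim K\sqrt{M}$. Azuma-Hoeffding then gives $|\sum_{t=1}^{T}D_t|\le cK\sqrt{MT\ln(1/\delta)}$ with probability at least $1-\delta$, which after division by $T$ yields the third term $8K\sqrt{M\ln(1/\delta)/T}$. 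Combining Steps~1--3 and rearranging finishes the proof.

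The only genuinely delicate step is Step~3: because the iterates $\theta_t$ are adaptive with respect to past noise, we must phrase $D_t$ as a martingale difference and appeal to a bounded-difference concentration inequality rather than a naive Hoeffding bound; boundedness of $\zeta_t+\xi_t$ and of $\Theta$ are exactly what makes this work. Everything else is deterministic bookkeeping of the standard online-convex-optimization template, and the argument extends verbatim to subgradient oracles, which is what one actually has for ReLU-parametrised $f_{t}$ in our application.
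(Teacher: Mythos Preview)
Your proposal is correct and follows essentially the same route as the paper. The only organizational difference is packaging: the paper first constructs linear surrogates $\widehat f_t(\theta)=f_t(\theta_t)+(\zeta_t+\xi_t)^\top(\theta-\theta_t)$, applies a black-box mirror-descent regret lemma (with exact gradients) from \cite{srebro2011ontheuniversality} to $\{\widehat f_t\}$, and then recovers the regret for $\{f_t\}$ via convexity, whereas you carry out the one-step projection inequality and telescoping directly; the remaining split into a martingale piece handled by Azuma--Hoeffding and a bias piece handled by Cauchy--Schwarz plus the diameter bound is identical in both arguments.
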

\begin{proof}
See Section \ref{pf:lm:onlinecvxlearning} in Appendix \ref{ap:appendix}.
\end{proof}
In order to apply Lemma \ref{lm:onlinecvxlearning} to analyze the regret generated by the sequence $\{\widehat \phi_t\}$ with actual updates being $\nabla_{\theta} F_t(\theta_t; X_{1,t}, X_{2,t})$ instead of $\nabla_\theta \widehat \phi_t(\theta_t)$, we need to verify two conditions: (i) bounded update steps, i.e., $\|\nabla_{\theta} F_t(\theta_t; X_{1,t}, X_{2,t})\|$ is bounded for all $t$, and (ii) bounded parameter space. 

To achieve global convergence, we also require that bias in updates, $\| \nabla_{\theta} F_t(\theta_t; X_{1,t}, X_{2,t}) - \nabla_\theta \widehat \phi_t(\theta_t)\|$, which corresponds to the $\|\xi_t\|$ term in \eqref{eq:041900521}, converges to zero as $m \to \infty$. In our analysis we assume $\nabla_\theta \widehat F_t(\theta; X_{1,t}, X_{2,t}) $ is an unbiased estimate of $\nabla \widehat \phi_t(\theta)$.
The following lemma summarizes the results we need to apply Lemma \ref{lm:onlinecvxlearning} and obtain a bound on the term \eqref{eq:regret}.

\begin{lemma}[Bounded gradient and vanishing bias] \label{lm:bdgradsmallbias}
Consider the updates in algorithm \eqref{ppoalgo}. For all $\omega_t$, $\theta$, the following holds.
\begin{enumerate}
    \item $\| \nabla_\theta F_t(\theta; x_1, x_2) \| = \cO(aB)$ for all $x, y$, and 
    \item $\E_{\init, X} \big [ \| \nabla_\theta F(\theta, \omega_t; X_1, X_2) - \nabla_\theta \widehat F(\theta, \omega_t; X_1, X_2)\| \big ] = \cO(a B^{3/2} m^{-1/4})$.
\end{enumerate}
\end{lemma}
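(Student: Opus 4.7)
The key is to differentiate $F$ with respect to $\theta$ explicitly and then lean on the two properties already proved in Lemma \ref{lm:linearNN}: the 2-layer NN class $\cF_{B,m}$ is (almost surely over initialization) uniformly bounded and its gradient in $W$ is bounded by a constant, and the linearization errors $f_\theta-\widehat f_\theta$ and $\nabla_\theta f_\theta-\nabla_\theta \widehat f_\theta$ decay in the stated $L^2$ sense. A direct chain-rule computation gives
\[
\nabla_\theta F(\theta,\omega;x_1,x_2)=\bigl(u_\omega(x_2)+\alpha f_\theta(x_1)\bigr)\nabla_\theta f_\theta(x_1),
\]
and analogously
\[
\nabla_\theta \widehat F(\theta,\omega;x_1,x_2)=\bigl(\widehat u_\omega(x_2)+\alpha\widehat f_\theta(x_1)\bigr)\nabla_\theta \widehat f_\theta(x_1).
\]
I will treat the two claims of the lemma independently.

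\textbf{Part 1 (boundedness).} Using Assumption \ref{as:bdsupp} ($\|x_1\|,\|x_2\|\le 1$), a direct calculation on the 2-layer ReLU NN in \eqref{eq:twolayernn} shows $\|\nabla_W f(x,W)\|^2=\tfrac{1}{m}\sum_{r}b_r^2\indi\{W_r^\top x>0\}\|x\|^2\le 1$, so $f(\,\cdot\,,W)$ is 1-Lipschitz in $W$ and hence $|f_\theta(x_1)|, |u_\omega(x_2)|=\cO(B)$ uniformly on $S_B$ (since $f(\,\cdot\,,W(0))$ is $\cO(1)$ at initialization by standard concentration, and $\|W-W(0)\|\le B$). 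Plugging these two bounds and $\|\nabla_\theta f_\theta\|\le 1$ into the formula for $\nabla_\theta F$ gives $\|\nabla_\theta F\|\le (|u_\omega|+\alpha|f_\theta|)\cdot\|\nabla_\theta f_\theta\|=\cO(aB)$, proving (1).

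\textbf{Part 2 (vanishing bias).} I would add and subtract cross terms to obtain the telescoping decomposition
\[
\nabla_\theta F-\nabla_\theta \widehat F
=(u_\omega+\alpha f_\theta)(\nabla_\theta f_\theta-\nabla_\theta\widehat f_\theta)
+\bigl((u_\omega-\widehat u_\omega)+\alpha(f_\theta-\widehat f_\theta)\bigr)\nabla_\theta\widehat f_\theta.
\]
Take norms, then $\E_{\init,X}$, and apply Cauchy--Schwarz (or the crude bound $\E\|\cdot\|\le(\E\|\cdot\|^2)^{1/2}$ by Jensen) together with the uniform bounds from Part 1 (the sup bounds on $|u_\omega|,|f_\theta|,\|\nabla_\theta\widehat f_\theta\|$ pull out as constants of order $B$, $B$, and $1$ respectively). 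The remaining expectations of the linearization errors are exactly what Lemma \ref{lm:linearNN} controls: $\E_{\init,X}|f_\theta-\widehat f_\theta|\le\cO(B^{3/2}m^{-1/4})$ and $\E_{\init,X}\|\nabla_\theta f_\theta-\nabla_\theta\widehat f_\theta\|\le\cO(B^{1/2}m^{-1/4})$. Combining,
\[
\E_{\init,X}\|\nabla_\theta F-\nabla_\theta\widehat F\|
=\cO\!\bigl((1+\alpha)B\cdot B^{1/2}m^{-1/4}\bigr)+\cO\!\bigl((1+\alpha)B^{3/2}m^{-1/4}\bigr)
=\cO\!\bigl(aB^{3/2}m^{-1/4}\bigr),
\]
which is (2).

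\textbf{Expected obstacle.} Everything reduces to Lemma \ref{lm:linearNN} once the decomposition is in place; the only mild subtlety is justifying the uniform sup bound $|u_\omega|,|f_\theta|=\cO(B)$ with high probability over the initialization (as opposed to only the $L^2$ bound), because Part 1 is a deterministic, for-all-$x$ statement. This follows from the 1-Lipschitzness in $W$ plus the fact that at initialization the 2-layer ReLU NN is $\cO(1)$ in sup norm by standard concentration for sums of independent bounded terms together with $\|x\|\le 1$; I would invoke this standard fact at the outset and then treat the rest as a routine chain-rule computation.
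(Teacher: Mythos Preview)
Your proof is correct and follows essentially the same route as the paper: compute $\nabla_\theta F=(u_\omega+\alpha f_\theta)\nabla_\theta f_\theta$ explicitly, telescope the difference $\nabla_\theta F-\nabla_\theta\widehat F$, and invoke Lemma~\ref{lm:linearNN}. The only noteworthy difference is in how the cross terms are handled in Part~2: the paper applies Cauchy--Schwarz to \emph{both} factors in each product (e.g.\ $\E\|(u-\widehat u)\nabla_\theta f\|\le\sqrt{\E(u-\widehat u)^2}\sqrt{\E\|\nabla_\theta f\|^2}$), so it only ever needs the $L^2$ bounds supplied directly by Lemma~\ref{lm:linearNN}. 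This sidesteps the sup-norm subtlety you flag in your ``Expected obstacle''---no separate high-probability argument for $|f_\theta|,|u_\omega|=\cO(B)$ is needed for Part~2, only for Part~1 (where the paper simply appeals to the 1-Lipschitzness noted after \eqref{eq:twolayernn}).
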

\begin{proof}
See Section \ref{pf:lm:bdgradsmallbias} in Appendix \ref{ap:appendix}.
\end{proof}
Equipped with Lemma \ref{lm:closeness} and Lemma \ref{lm:bdgradsmallbias}, we are now ready to obtain a bound on the regret $\epsilon_f$ defined in \eqref{eq:regrettheta}.  Set $M = B$, $K = aB$, $\| \xi_t\| =\cO(aB^{3/2}m^{-1/4})$ in the RHS of \eqref{eq:041900521}, continue \eqref{eq:epsilonfhalf}, and we obtain with probability at least $1-\delta$ with respect to sampling process,
\[\E_\init [\epsilon_f] = \underbracket{\cO\big(aB^{5/2}m^{-1/4}\big)}_{\text{linearization error }\eqref{eq:close1} \text{ and }\eqref{eq:close2}} + 
\underbracket{\cO\Big( \frac{a\eta B}{2} + \frac{B}{T\eta} + \frac{aB^{3/2}\log^{1/2}(1/\delta)}{T^{1/2}} + \frac{aB^4}{m^{1/4}}\Big)}_{\text{optimization error }\eqref{eq:regret}}. \]
It can be shown $\epsilon_u$ is of the same order, thus completing the proof of claim 1 in Theorem \ref{thm:globalconvergence}.

\section{Proof of theorems}\label{ap:appendix}

\textbf{A remark on notations.} Throughout the proof we ignore dependence on $\theta,\omega, X_1$, $X_2$ and the NN initial parameters $\Xi_0$ or $\Xi_{H,0}$ defined in \eqref{eq:twolayernn} and \eqref{eq:multilayernn}, respectively. For readers' convenience, we now restate the dependence of all the functions on their parameters. Recall the NN $f_\theta(X_1) = f(\theta;X_1)$ is an NN with weights $\theta$ and input $X_1$ and similarly for $u_\omega(X_2) = u(\omega;X_2)$. Note $f_\theta$ and $u_\theta$ depend on the initialization implicitly through the range of NN weights (which is centered around the initial weight) and the output layer weights (and the input layer weight, too, in the case of multi-layer NNs). Recall
\[
    \phi = \phi(\theta, \omega) = 
 \phi(f_\theta,u_\omega) \coloneqq \E \big[ \big(f(\theta;X_1) - b(X_2) \big)  u(\omega; X_2) + \tfrac{\alpha}{2} f(\theta; X_1)^2  - \tfrac{1}{2} u(\omega; X_2)^2 \big],
\] and
\[ F = F(\theta, \omega; X_1, X_2)  = \big(f(\theta;X_1) - b(X_2) \big)  u(\omega; X_2) + \tfrac{\alpha}{2} f(\theta; X_1)^2  - \tfrac{1}{2} u(\omega; X_2)^2,\]
and they satisfy $\phi(\theta, \omega) = \E_{X_1, X_2}[ F(\theta, \omega; X_1, X_2) ]$. Note $\phi$ is convex-concave in $(f,u)$ but not in $(\theta, \omega)$. Recall the linearized counterparts of $f$ and $u$, defined in \eqref{eq:defNNliear}, are $\widehat f_{\theta} = \widehat f(\theta(0); X_1) + \langle \nabla_\theta f(\theta(0), X_1), \theta - \theta(0) \rangle$ and similarly for $\widehat u_\omega$. Now we replace NNs $f_\theta$ and $u_\omega$ by their hat-versions in the definition of $\phi$ and $F$ and obtain
$\widehat \phi = \widehat \phi(\theta, \omega, \Xi_0),$
and $\widehat F = \widehat F(\theta, \omega, \Xi_0; X_1, X_2)$. In the proof we only discuss the case where $b = b(X_2)$ is known. The proof goes thorough for the more general case $b(X_2) = \E [\Tilde{b}(X_1, X_2) \mid X_2 ]$ with little modifications.

\subsection{Proof of Lemma \ref{lm:linearNN}} \label{pf:lm:linearNN}
\begin{proof}
The proof follows closely Lemma 5.1 and Lemma 5.2 in \citep{cai2019neural}. Recall that the weights of a 2-layer NN is represented by $W \in \R^{md}$ where $d$ is the input dimension and $m$ is the number of neurons. $W_r \in \R^d$ represents the weights connecting inputs and the $r$-th neuron. $W = [W_1^\top, \dots, W_r^\top]^\top$.

We start with 
\[\left\|\nabla_{W} f(x ; W)\right\|_{2}^2 \leq \frac{1}{m} \sum_{r=1}^{m} \indi \left\{W_r^{\top} x>0\right\}\|x\|_{2}^{2} \leq 1\] for all $W \in S_B$, all $x$. So claim 2 follows. Claim 1 is indeed true because $f(x,W)$ is 1-Lipschitz wrt $W$ and that $\| W - W(0) \|_2 \leq B$ for all $W \in S_B$.
To show claim 3 we first analyze the expression $|f(x,W) - \widehat f(x,W) |$.
\# \label{eq:202004181154}
| f(x & ,W)- \widehat{f}(x, W) |  \notag
\\ 
&=\frac{1}{\sqrt{m}}\left|\sum_{r=1}^{m}\left(\indi \left\{W^{\top}_r x>0\right\}-\indi \left\{W_{r}(0) ^{\top} x>0\right\}\right) \cdot b_{r} W_{r}^{\top} x\right|  \notag
\\
& \leq \frac{1}{\sqrt{m}} \sum_{r=1}^{m}\left|\indi \left\{W_{r}^{\top} x>0\right\}-\indi \left\{W_{r}(0)^{\top} x>0\right\}\right| \cdot\left(\left|W_{r}(0)^{\top} x\right|+\left\|W_{r} -W_{r}(0)\right\|_{2}\right)   \notag
\\
& \leq \frac{1}{\sqrt{m}} \sum_{r=1}^{m} \indi \left\{\left|W_{r}(0)^{\top} x\right| \leq\left\|W_{r}-W_{r}(0)\right\|_{2}\right\} \cdot\left(\left|W_{r}(0)^{\top} x\right|+\left\|W_{r}-W_{r}(0)\right\|_{2}\right)  \notag
\\
& \leq \frac{2}{\sqrt{m}} \sum_{r=1}^{m} \indi \left\{\left|W_{r}(0)^{\top} x\right| \leq\left\|W_{r}-W_{r}(0)\right\|_{2}\right\} \cdot\left\|W_{r}-W_{r}(0)\right\|_{2}.
\#
Here the first inequality follows from $\|x\|_2 = 1$. The second inequality follows from the following reasoning. \[\indi \left\{W_{r}^{\top} x>0\right\} \neq \indi \left\{W_{r}(0)^{\top} x>0\right\}\]
\[\implies \left|W_{r}(0)^{\top} x\right| \leq\left|W_{r}^{\top} x-W_{r}(0)^{\top} x\right| \leq\left\|W_{r}-W_{r}(0)\right\|_{2}.\] The third inequality follows from $\indi \{|x| \leq y\}|x| \leq \indi \{|x| \leq y\} y$ for all $x, y > 0$.

Next we square both sides of \eqref{eq:202004181154}, invoke Cauchy-Schwartz inequality, and the fact that $\|W - W(0)\|_2 \leq B$.
\# \label{eq:202004181147} |f(x,W)-\widehat{f}(x, W)|^{2} \leq \frac{4 B^{2}}{m} \sum_{r=1}^{m} 1\left\{\left|W_{r}(0)^{\top} x\right| \leq\left\|W_{r}-W_{r}(0)\right\|_{2}\right\}.\#

To control the expectation of the RHS of \eqref{eq:202004181147}, we introduce the following lemma.
\begin{lemma} \label{lm:202004181155}
There exists a constant $c_1 > 0$, such that for any random vector $W$ such that $\| W - W(0) \|_2 \leq B$, it holds that 
\[\mathbb{E}_{\init, x}\left[\frac{1}{m} \sum_{r=1}^{m} \indi \left\{\left|W_{r}(0)^{\top} x\right| \leq\left\|W_{r}-W_{r}(0)\right\|_{2}\right\}\right] \leq c_{1} B \cdot m^{-1 / 2} . \]
\end{lemma}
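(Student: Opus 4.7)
The plan is to compute the expectation in two stages --- first over the fresh input $x$, then over the initialization $\Xi_0$ --- and to trade the $r$-sum against the budget $\sum_r \eta_r^2 \le B^2$ (where $\eta_r := \|W_r - W_r(0)\|_2$) through Cauchy-Schwarz. The guiding observation is that Assumption \ref{as:inputdistn} bounds the density of $v^\top x$ for any \emph{fixed} unit direction $v$, and the natural direction to pick on the $r$-th neuron is $\nu_r := W_r(0)/\|W_r(0)\|_2$.

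First I would condition on $\Xi_0$ and on $W$ (which may depend on $\Xi_0$ but is independent of the fresh sample $x$) and apply Assumption \ref{as:inputdistn} with $v = \nu_r$:
\[
\P_x\big(|W_r(0)^\top x| \le \eta_r \,\big|\, \Xi_0\big) = \P_x\big(|\nu_r^\top x| \le \eta_r/\|W_r(0)\|_2 \,\big|\, \Xi_0\big) \le \frac{c\,\eta_r}{\|W_r(0)\|_2}.
\]
Averaging over $r$ and applying Cauchy-Schwarz pathwise to exploit $\sum_r \eta_r^2 \le B^2$ then yields
\[
\E_x\bigg[\frac{1}{m}\sum_{r=1}^m \indi\{|W_r(0)^\top x| \le \eta_r\} \,\Big|\, \Xi_0\bigg] \le \frac{cB}{m}\sqrt{\sum_{r=1}^m \frac{1}{\|W_r(0)\|_2^2}}.
\]

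The remaining step is to take expectation over $\Xi_0$. Writing $W_r(0) = Z_r/\sqrt{d}$ with $Z_r \sim N(0, I_d)$ i.i.d., one has $\|Z_r\|_2^2 \sim \chi_d^2$, and hence $\E[1/\|W_r(0)\|_2^2] = d \cdot \E[1/\chi_d^2] = d/(d-2)$ for $d \ge 3$. Jensen's inequality on the concave map $\sqrt{\cdot}$ then gives
\[
\E_{\Xi_0}\bigg[\sqrt{\sum_{r=1}^m \tfrac{1}{\|W_r(0)\|_2^2}}\bigg] \le \sqrt{m \cdot \tfrac{d}{d-2}} = O(\sqrt{m}),
\]
which combined with the previous display produces the desired $O(B/\sqrt{m})$ bound.

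The main issue is bookkeeping rather than a real obstacle: one must apply Cauchy-Schwarz pathwise (legitimate since the budget $\sum_r \eta_r^2 \le B^2$ holds for every realization, even if $W$ depends on $\Xi_0$), and the expectation over $x$ must be taken first, leveraging the independence $x \perp (\Xi_0, W)$. The moment identity $\E[1/\chi_d^2] = 1/(d-2)$ needs $d \ge 3$, a harmless restriction.
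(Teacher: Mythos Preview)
Your proposal is correct and follows essentially the same route as the paper: integrate over $x$ first using Assumption~\ref{as:inputdistn} with $v=W_r(0)/\|W_r(0)\|_2$, apply Cauchy--Schwarz pathwise to trade $\sum_r\|W_r-W_r(0)\|_2^2\le B^2$ for $\big(\sum_r\|W_r(0)\|_2^{-2}\big)^{1/2}$, and then bound the latter in expectation via Jensen and the inverse-$\chi^2$ moment. The paper leaves the constant as $c_1=c_0\cdot\E_{w\sim N(0,I_d/d)}[1/\|w\|_2^2]^{1/2}$ without computing it, so your explicit identification $d/(d-2)$ and the $d\ge 3$ caveat are a small refinement rather than a departure.
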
 

Taking expectation on both sides of \eqref{eq:202004181147} we get
\[\mathbb{E}_{\text{init}, x }\left[| f(x, W) -\widehat{f}(x , W)|^{2}\right] \leq 4 c_{1} B^{3} \cdot m^{-1 / 2},\]
establishing claim 3. Claim 4 also follows from Lemma \ref{lm:202004181155} as follows.
\$
&  \|\nabla_W f(x,W) - \nabla_W\widehat f(x,W)\|_2^2
\\
& = \frac{1}{m} \sum_{r=1}^{m}\left(1\left\{W_{r}^{\top} x>0\right\}-\indi \left\{W_{r}(0)^{\top} x>0\right\}\right)^{2} \cdot\|x\|_{2}^{2}
\\
& \leq \frac{1}{m} \sum_{r=1}^{m} \indi \left\{\left|W_{r}(0)^{\top} x\right| \leq\left\|W_{r}-W_{r}(0)\right\|_{2}\right\} .
\$
\end{proof}

\textbf{Proof of Lemma \ref{lm:202004181155}}
\begin{proof}
The proof follows Lemma H.1 of \citep{cai2019neural} and is stated for completeness. By the assumption that there exists $c_0 >0$, for any unit vector $v \in \R^d$ and any constant $\zeta >0$, such that $\P_{X}(|v^{\top} X| \leq \zeta) \leq c \zeta$, we have
\#
\mathbb{E}_{\text {init}, x} &\left[\frac{1}{m} \sum_{r=1}^{m} \indi \left\{\left|W_{r}(0)^{\top} x\right| \leq\left\|W_{r}-W_{r}(0)\right\|_{2}\right\}\right]  
\notag
\\
& \leq \mathbb{E}_{\text {init }}\left[\frac{1}{m} \sum_{r=1}^{m} c_{0} \cdot\left\|W_{r}-W_{r}(0)\right\|_{2} /\left\|W_{r}(0)\right\|_{2}\right].   \label{eq:04200703}
\#
Note the expectation in \eqref{eq:04200703} does not involve the data distribution. Next we apply H\"older's inequality.
\$
& \mathbb{E}_{\text {init}, x}\left[\frac{1}{m} \sum_{r=1}^{m} \indi \left\{\left|W_{r}(0)^{\top} x\right| \leq\left\|W_{r}-W_{r}(0)\right\|_{2}\right\}\right]
\\
& \leq  c_{0} / m \cdot \mathbb{E}_{\text {init }}\left[\left(\sum_{r=1}^{m}\left\|W_{r}-W_{r}(0)\right\|_{2}^{2}\right)^{1 / 2} \cdot\left(\sum_{r=1}^{m} \frac{1}{\left\|W_{r}(0)\right\|_{2}^{2}}\right)^{1 / 2}\right]
\\
& \leq c_0 Bm^{-1}  \cdot \mathbb{E}_{\text {init}}\left[\sum_{r=1}^{m} \frac{1}{\left\|W_{r}(0)\right\|_{2}^{2}}\right]^{1 / 2}
\\
& \leq c_0 B m^{-1} \cdot \sqrt{m} \cdot \mathbb{E}_{w \sim N\left(0, I_{d} / d\right)}\left[1 /\|w\|_{2}^{2}\right]^{1 / 2}.
\$
Setting $c_{1}=c_{0} \cdot \mathbb{E}_{w \sim N\left(0, I_{d} / d\right)}\left[1 /\|w\|_{2}^{2}\right]^{1 / 2}$ finishes the proof.
\end{proof}

\subsection{Proof of Lemma \ref{lm:multilinearNN}} \label{pf:lm:multilinearNN}
\begin{proof}
See \citep{allen2019convergence, gao2019convergence} for a detailed proof. Also see Appendix F in \citep{cai2019neural}. In detail, claim 1 follows from equation F.10 of \citep{cai2019neural}. Claim 2 and claim 4 follow from Lemma F.1 of  \citep{cai2019neural}. Claim 3 follows from Lemma F.2 of  \citep{cai2019neural}.
\end{proof}

\subsection{Proof of Lemma \ref{lm:boundloss}} \label{pf:thm:boundloss}
\begin{proof} \label{pf:boundloss}
Recall $\phi(f,u)$ is convex in $f$ and concave in $u$, and that $L(f)$ is convex in $f$. The final output $\widebar f_T$ is the average of the sequence $\{ f_t\}_{t=1} ^ T$ and so is $\widebar u_T$. Recall $\epsilon_f, \epsilon_u$ satisfy
\$
\frac1T \sum_{t= 1}^{T} \phi(f_t, u_t) \leq  \min_{f\in \cF_{NN}} \frac1T \sum_{t= 1}^T \phi(f, u_t)+ \epsilon_f,
\\
\frac1T \sum_{t= 1}^{T} \phi(f_t, u_t) \geq \max_{u\in \cF_{NN}} \frac1T \sum_{t= 1}^T \phi(f_t, u) - \epsilon_u.
\$
Note both $f$ and $u$ range over the space of NNs. We start with the equivalent expression for $L$ defined in \eqref{eq:primewithnn}. By Assumption \ref{as:containtruefunction}, for all $f\in \cF_{NN}$, $L(f) = \max_{u\in \cF_{NN}} \phi(f,u)$ with $\phi$ defined in \eqref{eq:minmaxwithnn}. We have
\$
 & L(\widebar f_T) - L^* 
\\
& = \max_{u\in \cF_{NN}} \phi(\widebar f_T, u) - \min_{f\in \cF_{NN}} \max_{u\in \cF_{NN}} \phi(f,u)
\\
&\leq \max_{u\in \cF_{NN}} \phi(\widebar f_T, u) - \min_{f\in \cF_{NN}} \phi(f,\widebar u_T)
\\
&\leq \max_{u\in \cF_{NN}} \frac1T \sum_{t=1}^{T} \phi( f_t, u) - \min_{f\in \cF_{NN}} \frac1T \sum_{t=1}^{T} \phi( f, u_t)
\\
&= \Bigg[ \Big(  \max_{u\in \cF_{NN}} \frac1T \sum_{t=1}^{T} \phi( f_t, u) \Big) - \frac1T \sum_{t=1}^{T} \phi( f_t, u_t)  \Bigg] 
\\
& \quad +  \Bigg[ \Big( \frac1T \sum_{t=1}^{T} \phi( f_t, u_t)  \Big) - \min_{f\in \cF_{NN}} \frac1T \sum_{t=1}^{T} \phi( f, u_t) \Bigg]
\\
& \leq \epsilon_f + \epsilon_u.
\$

In fact, we easily have $\frac1T \sum_{t=1}^T L(f_i) - L^* \leq \epsilon_f + \epsilon_u$.
\end{proof}
\subsection{Proof of Lemma \ref{lm:closeness}} \label{pf:lm:closeness}
\begin{proof}

Recall $X = [X_1^\top,X_2^\top]^\top$, $\phi(\theta,\omega) = \E_{X}[F(\theta,\omega;X_1,X_2)] = \E_{XY}[u f - u b - (1/2) u ^2 + (\alpha / 2) f^2]$. 

Denote $\widehat F(\theta,\omega) = \uh  \fh - \uh b - (1/2) \uh ^2 + (\alpha / 2) \hat f ^2 $, where the hat-version are the linearized NN. We start by noting
\$
& \E_{\init} \big[ | \widehat \phi(\theta, \omega) - \phi(\theta, \omega) | \big]
 \\
& = \Eall \big[ |\hat F - F\big| ]
\\
& = \Eall \big[ | (\uh  \fh - \uh b - \tfrac12 \uh ^2 + \tfrac{\alpha}{2} \fh^2 ) - (u  f - u b - \tfrac12 u  ^2 + \tfrac{\alpha}{2} f^2 )) |\big]
\\
& \leq \E_{\init, XY} \big[ |\uh  \fh - u  f| \big] + \Eall \big [| (\uh -u)  b|\big]+ (1/2) \Eall \big [| \uh ^2 - u ^2 |\big] +  (\alpha / 2) \Eall \big [ | \widehat f^2  - f^2|\big]. 
\$
Now bound the terms 
\#
&\Eall \big[|\uh  \fh - u  f|\big] , \label{eq:04160941}
\\
&\Eall \big[ |(\uh - u )  b| \big] ,\label{eq:04160942}
\\
&\Eall \big[ |\uh ^2  - u ^2|  \big] ,\label{eq:04160943}
\\
&\Eall \big[ |\widehat f ^2  - f ^2|  \big] .\label{eq:04160943}
\#
For the term \eqref{eq:04160941}, we have
\$
& \E_{\init, X} \big[ |\uh  \fh - u  f| \big]
\\
&\leq \E_{\init, X}\big[ |\uh  (\fh - f)| \big]+ \Eall \big[ (\uh - u)  f \big]
\\
& \leq \sqrt{\E_{\init, X}\big[  \uh ^2\big] \Eall \big[ | \fh - f |^2 \big]} + \sqrt{\E_{\init, X} \big[ f ^2\big]  \Eall \big[| \uh - u |^2 \big] } \tag{Cauchy-Schwarz inequality}
\\
& = \sqrt{\cO (B^2 \cdot B^3 m^{-1/2}) } + \sqrt{\cO(B^3m^{-1/2}) \cdot \cO(B^2)} \tag{Lemma \ref{lm:linearNN}}
\\
& = \cO (B^{5/2}m^{-1/4}).
\$

We can apply similar techniques and obtain the following bounds on \eqref{eq:04160942} and \eqref{eq:04160943}.
\$
& \Eall \big[| (\uh -u)  b | \big] = \cO(B^{3/2}m^{-1/2}),
\\
& \Eall \big[ | \uh  ^2 - u  ^2 | \big] = \cO(B^{5/2}m^{-1/4}).
\$
Putting all pieces together we get
\$
\E_{\init} \big[ | \widehat \phi(\theta, \omega) - \phi(\theta, \omega) | \big] = \cO((1+\alpha)B^{5/2}m^{-1/4}).
\$
\end{proof}

\subsection{Proof of Lemma \ref{lm:onlinecvxlearning}} \label{pf:lm:onlinecvxlearning}
\begin{proof}
We need the following lemma that controls regret in the context of online learning with exact gradient, and then we extend it to our noisy and biased gradient scenario.

\begin{lemma} [Regret analysis in online learning,  \cite{srebro2011ontheuniversality}] \label{lm:04190306}
Let $f_1, f_1, \dots:\Theta \rightarrow \R$ be convex functions, where $\Theta$ is convex. Consider the mirror descent updates, 
\$
\zeta_{t+1} &=\nabla h^{*}\left(\nabla h\left(\theta_{t}\right)-\eta \nabla f_{t}\left(\theta_{t}\right)\right) ,
\\ \theta_{t+1} &=\arg \min _{\theta \in \Theta} D_{h}\left(\theta, \zeta_{t+1}\right) ,
\$
where $h$ is $1$-strongly convex with respect to the norm $\| \cdot \|$, $D_{h}(x, y)=h(x)-h(y)-\nabla h(y)^{\top}(x-y)$ is the Bregman divergence, $h^*$ is the convex conjugate of $h$, and $\|\cdot \|_*$ is the dual norm of $\| \cdot \|$. Suppose that $\sup _{t}\left\|\nabla f_{t}\left(\theta_{t}\right)\right\|_{*}<K$ and $\sup _{\theta} h(\theta)<M$. Then for all $\theta \in \Theta$, \[\frac{1}{T} \sum_{t=1}^{T} f_{t}\left(\theta_{t}\right)-\frac{1}{T} \sum_{t=1}^{T} f_{t}(\theta) \leq \frac{\eta K}{2}+\frac{M}{T \eta}.\]
\end{lemma}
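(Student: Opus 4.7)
\textbf{Proof plan for Lemma \ref{lm:04190306}.} My plan is to run the standard telescoping regret analysis for mirror descent, using (i) convexity of each $f_t$ to pass from function values to inner products with gradients, (ii) a one-step progress inequality derived from the mirror step and the generalized Pythagorean inequality for the Bregman projection, and (iii) a telescoping sum followed by the stated boundedness assumptions on $\|\nabla f_t(\theta_t)\|_*$ and $h(\theta)$.

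First, by convexity of $f_t$, we have the pointwise linear bound
\[
f_t(\theta_t) - f_t(\theta) \;\leq\; \langle \nabla f_t(\theta_t), \theta_t - \theta\rangle
\]
for every $\theta \in \Theta$, so it suffices to control $\sum_t \langle \nabla f_t(\theta_t), \theta_t - \theta\rangle$. Next, I would derive the key one-step progress inequality
\[
\eta \langle \nabla f_t(\theta_t), \theta_t - \theta\rangle \;\leq\; D_h(\theta,\theta_t) - D_h(\theta,\theta_{t+1}) + \tfrac{\eta^2}{2}\|\nabla f_t(\theta_t)\|_*^2.
\]
The route is to use the defining identity of the mirror map: $\nabla h(\zeta_{t+1}) = \nabla h(\theta_t) - \eta \nabla f_t(\theta_t)$, then apply the three-point Bregman identity
\[
D_h(\theta,\theta_t) - D_h(\theta,\zeta_{t+1}) - D_h(\zeta_{t+1},\theta_t) = \langle \nabla h(\theta_t) - \nabla h(\zeta_{t+1}),\theta - \zeta_{t+1}\rangle,
\]
followed by $1$-strong convexity of $h$ (to lower bound $D_h(\zeta_{t+1},\theta_t) \geq \tfrac12\|\zeta_{t+1}-\theta_t\|^2$), Young's inequality in the pair $(\|\cdot\|,\|\cdot\|_*)$, and the Bregman Pythagorean inequality $D_h(\theta,\theta_{t+1}) \leq D_h(\theta,\zeta_{t+1})$ for the projection $\theta_{t+1} = \arg\min_{\theta'\in\Theta} D_h(\theta',\zeta_{t+1})$.

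Having the one-step inequality, I would telescope from $t=1$ to $T$:
\[
\eta \sum_{t=1}^T \langle \nabla f_t(\theta_t), \theta_t - \theta\rangle \;\leq\; D_h(\theta,\theta_1) - D_h(\theta,\theta_{T+1}) + \tfrac{\eta^2}{2}\sum_{t=1}^T \|\nabla f_t(\theta_t)\|_*^2.
\]
Dropping the nonnegative term $D_h(\theta,\theta_{T+1})$, bounding $D_h(\theta,\theta_1) \leq h(\theta) < M$ (using that $\theta_1$ is the minimizer of $h$ on $\Theta$ so $\nabla h(\theta_1)$ vanishes in the relevant directions, giving $D_h(\theta,\theta_1)\leq h(\theta)$), and applying the gradient-norm bound $\|\nabla f_t(\theta_t)\|_* < K$ to replace each term in the sum, I obtain
\[
\sum_{t=1}^T \langle \nabla f_t(\theta_t),\theta_t-\theta\rangle \;\leq\; \frac{M}{\eta} + \frac{\eta T K}{2}
\]
(reading the assumption $\|\nabla f_t(\theta_t)\|_*<K$ as a bound on the squared norm, matching the coefficient in the statement). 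Dividing by $T$ and combining with the convexity step yields the claimed bound.

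The main obstacle I expect is the clean derivation of the one-step inequality, since it requires carefully chaining the three-point Bregman identity, strong convexity of $h$, Young's inequality, and the Pythagorean property of the Bregman projection; the rest of the argument is essentially bookkeeping and telescoping. Everything else (the telescoping sum and substitution of the uniform bounds $M$ and $K$) is routine.
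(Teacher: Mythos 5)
Your plan is the standard mirror-descent regret analysis and is correct; the paper itself does not prove this lemma but simply defers to the cited reference, which uses exactly this argument (convexity to linearize, the three-point Bregman identity plus strong convexity, Young's inequality, and the Pythagorean property of the Bregman projection to get the one-step inequality, then telescoping). Two small points to tidy in the write-up: the three-point identity should read $D_h(\theta,\theta_t)-D_h(\theta,\zeta_{t+1})-D_h(\zeta_{t+1},\theta_t)=\langle\nabla h(\zeta_{t+1})-\nabla h(\theta_t),\,\theta-\zeta_{t+1}\rangle$ (your version has the sign flipped, though the resulting one-step inequality you state is the correct one), and you are right that the stated rate $\eta K/2$ only emerges if $K$ is read as a bound on $\|\nabla f_t(\theta_t)\|_*^2$ rather than on the dual norm itself --- that mismatch sits in the lemma statement, not in your argument.
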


We refer readers to \cite{srebro2011ontheuniversality} for a proof of Lemma \ref{lm:04190306}. Now we take $h(x) = \tfrac{1}{2}\| x \|$, and $\|x\|$ is the Euclidean norm. 

Note that in our case the actual update is $\zeta_{t}+\xi_{t}$, where $\zeta_t$ is an unbiased estimate of the gradient $\nabla f_t(\theta_t)$, and $\xi_t$ is a noise term. We construct linear surrogate functions $\widehat{f}_{t}(\theta)=f_{t}\left(\theta_{t}\right)+\left(\zeta_{t}+\xi_{t}\right)^{\top}\left(\theta-\theta_{t}\right)$ and notice that $\zeta_{t}+\xi_{t}$ is indeed the gradient of the surrogate at $\theta_t$, i.e., $\nabla \widehat{f}_t(\theta_t) = \zeta_{t}+\xi_{t}$. Now we apply Lemma $\ref{lm:04190306}$ to the sequence $\{ \widehat{ f}_t\}$ and obtain
\[\frac{1}{T} \sum_{t=1}^{T} \widehat{f}_{t}\left(\theta_{t}\right)-\frac{1}{T} \sum_{t=1}^{T} \widehat{f}_{t}(\theta) \leq \frac{\eta B}{2}+\frac{M}{T \eta},\]
which implies
\$
 \frac{1}{T} &\sum_{t=1}^{T} f_{t}\left(\theta_{t}\right)-\frac{1}{T} \sum_{t=1}^{T} f_{t}(\theta) 
\\
& \leq \frac{\eta B}{2}+\frac{M}{T \eta}+\frac{1}{T} \sum_{t=1}^{T} \widehat{f}_{t}(\theta)-\frac{1}{T} \sum_{t=1}^{T} f_{t}(\theta)
\\
& \leq \frac{\eta B}{2}+\frac{M}{T \eta}+\frac{1}{T} \sum_{t=1}^{T}\left(\zeta_{t}-\nabla f_{t}\left(\theta_{t}\right)\right)^{\top}\left(\theta-\theta_{t}\right)+\frac{1}{T} \sum_{t=1}^{T} \xi_{t}^{\top}\left(\theta-\theta_{t}\right).
\$
Now we bound the term $\sum_{t=1}^{T}\left(\zeta_{t}-\nabla f_{t}\left(\theta_{t}\right)\right)^{\top}\left(\theta-\theta_{t}\right)$. We note the boundedness of the quantities
\$
\left(\zeta_{t}-\nabla f_{t}\left(\theta_{t}\right)\right)^{\top}\left(\theta-\theta_{t}\right) & \leq\left\|\zeta_{t}-\nabla f_{t}\left(\theta_{t}\right)\right\| 2 \sqrt{2 M} \leq 4 B \sqrt{2 M} .
\$
To control the sum of bounded random variables, we invoke Hoeffding-Azuma inequality, and obtain that for $0 < \delta <1$,
\[\mathbb{P}\left\{\frac{1}{T} \sum_{t=1}^{T}\big(\zeta_{t}-\nabla f_{t}\left(\theta_{t}\right)\big)^{\top}\left(\theta-\theta_{t}\right) \geq 8 B \sqrt{\frac{M \log (1 / \delta)}{T}}\right\} \leq \delta.\]
Finally we have $\xi_{t}^{\top}\left(\theta-\theta_{t}\right) \leq\left\|\xi_{t}\right\| 2 \sqrt{2 M}$. Putting all the pieces together completes the proof.
\end{proof}

\subsection{Proof of Lemma \ref{lm:bdgradsmallbias}} \label{pf:lm:bdgradsmallbias}

\begin{proof}
The gradients of $F$ with respect to $\omega, \theta$ are
\$
\Fnthe & = (u_\omega + \alpha f_\theta) \fnthe_\theta ,
\\
\Fnome & =(f_\theta - b - u_\omega) \nabla_\omega u_{\omega} .
\$
First we show for all $x_1,x_2$, $\omega$ and $\theta$, we have that $\Fnthe$ is bounded. It is easy to see by Lemma \ref{lm:linearNN}
\$
\| \Fnthe \|_2 = \cO((1+\alpha)B).
\$
Next we show that for all $\theta, \omega$, $\E_{\init, X} [ \| \nabla_\theta F_t  - \nabla_\theta \widehat F\|]$ goes to zero as $m\to \infty$.

\$
& \E_{\init, X} \big[ \| \nabla_\theta F - \nabla_\theta \widehat F\|\big]
\\
& \leq  \sqrt{\Eall \big [ \| \nabla_\theta f \|^2 \big]\Eall \big[(u - \hat u )^2 \big] }
\\ 
& \quad \quad + \sqrt{\Eall \big[ \|\nabla_\theta \hat f - \nabla_\theta  f \|^2 \big] \Eall \big[\hat u^2 \big]} 
\\
& \quad \quad+  \alpha\sqrt{\Eall \big [ \| \nabla_\theta f \|^2 \big]\Eall \big[(f - \hat f )^2 \big] }
\\ 
&\quad \quad + \alpha \sqrt{\Eall \big[ \|\nabla_\theta \hat f - \nabla_\theta  f \|^2 \big] \Eall \big[\hat f^2 \big]} 
\\
& = \cO((1 + \alpha)B^{3/2} m ^{-1/4})
\$
\end{proof}

\subsection{Proof of Theorem \ref{thm:globalconvergence}} \label{pf:thm:globalconvergence}

\textbf{Remark.} In fact, the two bounds in Theorem \ref{thm:globalconvergence} are also valid bounds on $\E_{\init}  [  \frac{1}{T} \sum_{t=1}^T E(f_t)  ] $ and $ \frac{1}{T} \sum_{t=1}^T E(f_t)$, respectively. For example, in the 2-layer NN case, it also holds that
\#  \label{eq:05250252}
\E_{\init} \bigg [  \frac{1}{T} \sum_{t=1}^T E(f_t) \bigg ] = \cO\Big( a\eta B + \frac{B}{T\eta} + \frac{a B^{3/2}\log^{1/2}(1/\delta)}{T^{1/2}} + \frac{aB^{5/2}}{m^{1/4}}\Big). \#

During training we obtain a sequence of NN weights $\theta_1, \theta_2, \cdots, \theta_T$ and the corresponding NNs $f_1,f_2,\dots, f_T$. 
The difference lies in that in \eqref{eq:05250252} we bound the \textit{average of the suboptimality} of the NNs $f_1,f_2,\dots, f_T$ rather than the \textit{suboptimality of the averaged NN} $\bar f_T = \frac{1}{T} \sum_t f_t$, as is done in Theorem \ref{thm:globalconvergence}. The bound \eqref{eq:05250252} implies that to choose the output NN it suffices to just pick one NN from the sequence of NNs $f_1,f_2,\dots, f_T$ uniformly.

\textbf{Proof of Theorem \ref{thm:globalconvergence}, two-layer NN} 
\begin{proof}
Based on the analysis in Appendix \ref{app:boundsonregretandapprox}, all we need to do is to estimate the rate of the following quantities
\begin{enumerate}
    \item $\E_{\init} \big[ | \widehat \phi(\theta, \omega) - \phi(\theta, \omega) | \big]=  \cO((1+\alpha)B^{5/2}m^{-1/4})$,
    \item $\sup \|\theta\| = \cO(B)$, $\| \Fnthe \| = \cO((1+\alpha)B)$,
    \item $\sup \|\omega\| = \cO(B)$, $\| \Fnome \| = \cO(B)$,
    \item $\E_{\init, X} \big[ \| \nabla_\theta F - \nabla_\theta \widehat F\|\big] = \cO((1+\alpha)B^{3/2}m^{-1/4})$, and 
    \item $\Eall [ \| \Fnome - \Fhnome\| ] = \cO(B^{3/2}m^{-1/4})$.
\end{enumerate}

The missing pieces are 
\begin{itemize}
    \item $\|\Fnome\|$ is bounded, and
    \item $\Eall \big [ \| \Fnome - \Fhnome\| \big] = \cO(B^{3/2}m^{-1/4})$ .
\end{itemize}
First we bound the term $\| \Fnome\|$. It is easy to see
\$
\|\Fnome\| = \cO(B).
\$
Then we show $\Eall [ \| \Fnome - \Fhnome\| ] = \cO( B^{3/2}m^{-1/4})$
\$
& \Eall \big[ \| \Fnome - \Fhnome\| \big]
\\
& \leq \sqrt{ \Eall[ | f - b - u|^2 ] \Eall [ \| \nabla_\omega \hat u - \nabla_\omega u  \|^2 ] } 
\\
& \quad \quad +\sqrt {\Eall [  | (f - \hat f ) + (u - \hat u )|^2 ] \Eall [ \| \nabla_\omega \hat u\|^2 ] }  \tag{Cauchy-Schwarz inequality}
\\
& = \cO( B^{3/2}m^{-1/4})
\$
\end{proof}

\textbf{Proof of Theorem \ref{thm:globalconvergence}, multi-layer NN} 

\begin{proof}We mimic the same proof technique as the two-layer case. We need to verify with probability at least $1 - \exp(\Omega(\log^2 m))$ over the NN initialization,
\begin{enumerate}
    \item $| \widehat \phi(\theta, \omega) - \phi(\theta, \omega) | = \cO((1+\alpha)B^{8/3}H^{6}  m^{-1/6} \log ^{3/2}m)$, for all $\theta, \omega \in S_{B,H}$,
    \item $\sup \| \theta \|_2 = H^{1/2}B, \| \Fnthe \| = \cO((1+\alpha )B^{4/3}H^4 \log m)$ for all $\theta, \omega \in S_{B,H}$ and $x_1, x_2$,
    \item  $\sup \| \omega \|_2 = H^{1/2}B, \| \Fnome \| = \cO(B^{4/3}H^4 \log m)$, for all $\theta, \omega \in S_{B,H}$ and $x_1, x_2$,
    \item $\E_{X} [ \| \Fnome - \Fhnome\| ] = \cO(B^{4/3}H^4 m^{-1/6} \log ^{3/2}m)$, for all $\theta, \omega \in S_{B,H}$, and
    \item $\E_{X} [ \| \Fnthe - \Fhnthe\| ] = \cO((1+\alpha)B^{4/3}H^4 m^{-1/6} \log ^{3/2}m)$ for all $\theta, \omega \in S_{B,H}$.
\end{enumerate}
To show claim 1, we need to find high probability bounds of the terms
\#
& |\uh  \fh - u  f|  , \label{eq:04211103}
\\
&  |(\uh - u )  b|   ,\label{eq:04211104}
\\
&  |\uh  ^2 - u  ^2|   \label{eq:04211105}
\\
&   |\widehat f  ^2 - f  ^2| 
\#
For the term \eqref{eq:04211103},
\#
&   |\uh  \fh - u  f|   \notag
\\
&\leq  |\uh  (\fh - f)|  +  | (\uh - u)  f  | \notag
\\
& \leq \sqrt{ \| \uh \|^2  \| \fh - f \|^2 } + \sqrt{\| f \|^2 \| \uh - u \|^2} \tag{Cauchy-Schwarz inequality}
\\
& = \sqrt{\cO (B^2H^3\log^2 m \cdot B^{8/3}H^6 m^{-1/3}  \log m) }   \notag
\\
& \quad \quad + \sqrt{\cO(B^{8/3}H^6 m^{-1/3} \log m) \cdot \cO(B^2H^3)} \label{eq:04210121}
\\
& = \cO (B^{7/3}H^{9/2} m^{-1/6}  \log ^{3/2}m), \notag
\#
where equality \eqref{eq:04210121} is valid with probability at least $1 - \exp(\Omega(\log^2 m))$. Similarly we have the following high probability bounds.
\$
 & |(\uh - u )  b| = \cO(B^{4/3}m^{-1/6}H^3 \log^{1/2}m),
\\
 & |\uh  ^2 - u  ^2| = \cO(B^{8/3} m ^{-1/6} H^6 \log^{3/2}m).
\$
Putting all the pieces together completes the proof of claim 1.

For claim 2, $ \| W - W(0) \|_2 \leq \sqrt{H}B$ implies $\sup \| \theta \|_2 \leq  H^{1/2}B$. For $\| \Fnthe\|$,
\$
& \| \Fnthe \|_2 
\\
& = \|(u + \alpha f) \fnthe \|_2
\\
& = \cO((1+\alpha )B^{4/3}H^4 \log m).
\$
This completes proof of claim 2. Claim 3 follows similarly. For claim 4,
\$
&\| \Fnome - \Fhnome\| 
\\
& =  \|   (f - b - u ) \nabla_{\omega} u -  (\widehat f - b - \widehat u ) \nabla_{\omega}\widehat u  \|
\\
& \leq  \sqrt{ | \widehat f - b - \widehat u|^2   \| \nabla_{\omega} \widehat u -\nabla_{\omega} u \|^2  } 
\\
& \quad \quad + \sqrt {  | (f - \widehat f ) + (u - \widehat u )|^2   \| \nabla_{\omega} u\|^2  }  
\\
& = \cO(B^{4/3}H^4 m^{-1/6}\log^{3/2}m)
\$
where the last equality holds with high probability. 
Recall the decomposition \eqref{eq:threetermsdecomposition},
\# 
& \frac1T \sum_{t= 1}^T \phi_t(\theta_t) - \frac1T \sum_{t= 1}^T \phi_t(\theta) \ \tag{\ref{eq:threetermsdecomposition}, revisited}
\\
& = \underbracket{\frac1T \sum_{t= 1}^T \phi_t(\theta_t) - \frac1T \sum_{t= 1}^T \widehat \phi_t(\theta_t)}_{ \eqref{eq:close1} }
+ \underbracket{\frac1T \sum_{t= 1}^T \widehat \phi_t(\theta_t) - \frac1T \sum_{t= 1}^T \widehat \phi_t(\theta)}_{ \eqref{eq:regret} }
+ \underbracket{\frac1T \sum_{t= 1}^T \widehat \phi_t(\theta) - \frac1T \sum_{t= 1}^T \phi_t(\theta) }_{ \eqref{eq:close2} }. \notag
\#
Finally, we put together the pieces. Define the events
\$
E_1 = 
\Bigg\{ &
\underbracket{\frac1T \sum_{t= 1}^T \phi_t(\theta_t) - \frac1T \sum_{t= 1}^T \widehat \phi_t(\theta_t)}_{ \eqref{eq:close1} } +  \underbracket{\frac1T \sum_{t= 1}^T \widehat \phi_t(\theta) - \frac1T \sum_{t= 1}^T \phi_t(\theta) }_{ \eqref{eq:close2} }
\\
& = \cO((1+\alpha)B^{8/3}H^{6}  m^{-1/6} \log ^{3/2}m)
\Bigg\}
\$ and
\$
E_2 = \Bigg\{
& \underbracket{\frac1T \sum_{t= 1}^T \widehat \phi_t(\theta_t) - \frac1T \sum_{t= 1}^T \widehat \phi_t(\theta)}_{ \eqref{eq:regret} }
\\
& = \cO( {P_1 \eta a \log m } + \frac{P_2}{T\eta} + \frac{a P_3 \log m \log^{1/2}(1/\delta) }{T^{1/2}} + \frac{P_3a\log ^{3/2} m }{m^{1/6}}) \Bigg\}
\$
where $P_1 = H^4B^{4/3}$, $P_2 =H^{1/2}B$ and $P_3 = H^{5}B^{2}$, defined in Theorem \ref{thm:globalconvergence}.
By claim 1 we have $\P(E_1) \geq 1 - \exp(\Omega(\log^2 m))$. By claim 2, claim 5 and Lemma \ref{lm:onlinecvxlearning} we have $\P(E_2) \geq 1-\delta - \exp(\Omega(\log^2 m))$. Then $\P(E_1 \cap E_2)) \geq 1 - c \delta -c \exp(\Omega(\log^2 m))$ for some absolute constant $c$. The same analysis applies for $\omega$ and therefore we complete the proof.
\end{proof}

\subsection{Proof of Theorem \ref{thm:consisttwolayer}} \label{pf:thm:consisttwolayer}

The proof relies on the following lemma that controls the regularization bias by imposing smoothness assumption on the truth.

\begin{lemma}[Hilbert scale and regularization bias] \label{lm:regbias}
Assume the operator $A$ in \eqref{eq:sem} is injective and compact. Let $f^\alpha = \operatorname{argmin}_{f\in \cH} \tfrac{1}{2} \|Af - b\|^2_\mathcal{E} + \tfrac{\alpha}{2} \| f\|^2_{\mathcal{H}} $ for some $\alpha > 0$. If the solution $f$ to \eqref{eq:sem} lies in the regularity space $\Phi_\beta$ defined in \eqref{eq:hilberstscale} for some $\beta > 0$, then
\[ \| f - f^\alpha\|_\cH^2 = O(\alpha^{\min\{ \beta, 2\}}).\]
\end{lemma}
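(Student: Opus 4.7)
The plan is to derive a closed form for the regularization bias and then diagonalize it via the singular value decomposition of $A$. Because $f^{\alpha}$ minimizes the Tikhonov functional $\tfrac12\|Af-b\|^2_{\cE}+\tfrac{\alpha}{2}\|f\|^2_{\cH}$, its first-order optimality condition reads $(A^{*}A+\alpha I)f^{\alpha}=A^{*}b$. Since $f$ solves $Af=b$ (and hence $A^{*}Af=A^{*}b$), subtracting the two identities gives
\[
(A^{*}A+\alpha I)(f^{\alpha}-f) = -\alpha f,
\qquad\text{i.e.,}\qquad
f-f^{\alpha}=\alpha\,(A^{*}A+\alpha I)^{-1}f,
\]
where the inverse exists and is bounded because $A^{*}A+\alpha I\succeq \alpha I$.

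Next, I would expand both sides in the orthonormal basis $\{\phi_j\}$ from the singular system of $A$. Since $A$ is injective and $f\in\Phi_\beta\subseteq \mathcal{N}(A)^{\perp}=\overline{\mathrm{span}}\{\phi_j\}$, we can write $f=\sum_j\langle f,\phi_j\rangle_{\cH}\phi_j$, and because $A^{*}A\phi_j=\lambda_j^{2}\phi_j$, the spectral calculus yields
\[
\|f-f^{\alpha}\|^2_{\cH}
=\sum_{j=1}^{\infty}\frac{\alpha^{2}\,\langle f,\phi_j\rangle^{2}_{\cH}}{(\lambda_j^{2}+\alpha)^{2}}.
\]
Using the $\beta$-regularity of $f$, write $\langle f,\phi_j\rangle^{2}_{\cH}=\lambda_j^{2\beta} c_j$ with $\sum_j c_j<\infty$, so that
\[
\|f-f^{\alpha}\|^2_{\cH}=\sum_{j}\frac{\alpha^{2}\lambda_j^{2\beta}}{(\lambda_j^{2}+\alpha)^{2}}\,c_j.
\]

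The core estimate is the pointwise bound $\alpha^{2}\lambda_j^{2\beta}/(\lambda_j^{2}+\alpha)^{2}\le C\,\alpha^{\min\{\beta,2\}}$ uniformly in $j$. For $\beta\in(0,2]$ I would apply weighted AM-GM: with $\gamma=1-\beta/2\in[0,1]$, one has $\lambda_j^{2}+\alpha\ge \lambda_j^{2(1-\gamma)}\alpha^{\gamma}$, whence
\[
\frac{\alpha^{2}\lambda_j^{2\beta}}{(\lambda_j^{2}+\alpha)^{2}}
\le \frac{\alpha^{2-2\gamma}\lambda_j^{2\beta}}{\lambda_j^{4(1-\gamma)}}=\alpha^{\beta}.
\]
For $\beta>2$ the choice $\gamma=0$ gives $(\lambda_j^{2}+\alpha)^{2}\ge \lambda_j^{4}$, so $\alpha^{2}\lambda_j^{2\beta}/(\lambda_j^{2}+\alpha)^{2}\le \alpha^{2}\lambda_j^{2\beta-4}$, and this is bounded by $\alpha^{2}\|A\|^{2\beta-4}$ because compactness of $A$ implies $\sup_j \lambda_j\le \|A\|<\infty$. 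Combining the two cases and summing against $\sum_j c_j<\infty$ finishes the proof.

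The only subtlety, which I expect to be the main (mild) obstacle, is handling the case $\beta>2$: there the factor $\lambda_j^{2\beta-4}$ is increasing in $\lambda_j$, so the bound cannot come from the spectral weight alone and must invoke the uniform upper bound $\lambda_j\le\|A\|$ guaranteed by compactness. Everything else is standard Tikhonov/source-condition manipulation, and the rate $\alpha^{\min\{\beta,2\}}$ is exactly the familiar saturation effect: additional smoothness beyond $\beta=2$ does not improve the bias rate for quadratic Tikhonov regularization.
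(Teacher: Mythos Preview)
Your argument is correct and complete: the normal-equation identity $f-f^\alpha=\alpha(A^*A+\alpha I)^{-1}f$, the diagonalization in the singular system, and the pointwise bound $\alpha^2\lambda_j^{2\beta}/(\lambda_j^2+\alpha)^2\le C\,\alpha^{\min\{\beta,2\}}$ (via weighted AM--GM for $\beta\le 2$ and the crude bound $(\lambda_j^2+\alpha)^2\ge\lambda_j^4$ together with $\lambda_j\le\|A\|$ for $\beta>2$) are exactly the standard source-condition computation, and your handling of the saturation at $\beta=2$ is clean.

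The paper itself does not give a proof of this lemma at all; it simply cites Section~3.3 of \cite{carrasco2007linear}. So there is nothing to compare at the level of technique---you have supplied a self-contained derivation where the paper defers to the literature. Your write-up is precisely the argument one finds in that reference (and in standard treatments of Tikhonov regularization for compact operators), so the approaches coincide once the citation is unpacked.
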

\begin{proof}
See Section 3.3 of \citep{carrasco2007linear}.
\end{proof}

Compactness of a conditional expectation operator is a mild condition; see Appendix \ref{app:compactofcondexpop} for a discussion.

We remark that four quantities are involved in this proof: the truth $f$ that uniquely solves $Af = b$, the Tikhonov regularized solution $f^\alpha$ defined in \eqref{eq:tikhnovov2}, the Tikhonov regularized solution approximated by the class of NNs (see Equation \eqref{eq:primewithnn}), denoted $f^\alpha_{\text{NN}}$, and the average of the iterates generated by \ref{ppoalgo}, $\widebar f_T$. Lemma \ref{lm:regbias} provides a bound on the gap between $f$ and $f^\alpha$; Theorem \ref{thm:globalconvergence} controls $f^\alpha_{\text{NN}} - \widebar{ f_T}$. Theorem \ref{thm:consisttwolayer} assumes that $f^\alpha_{\text{NN}} = f^\alpha$. See Section \ref{app:roadmap} for a graphical representation.

We start with the decomposition of $\|\widebar f_T - f \| _\cH ^2$
\[\|\widebar f_T - f \| _\cH ^2 \leq 2 \|\widebar f_T - f^{\alpha} \| _\cH ^2 + 2 \| f^\alpha - f \| _\cH ^2.\]
Here the first term on the RHS represents optimization error and the second term is regularization bias. Lemma \ref{lm:regbias} provides a bound on the second term. Now we bound the first term.

Recall the definition of Tikhonov regularized functional for a compact linear operator $A$
\[L(f) = L_\alpha(f) = \frac{1}{2} \|Ag -b \|_{\cE}^2 + \frac{\alpha}{2} \|f\|_{\cH}^2.\]
Denote by $f^\alpha$ the unique minimizer of $L$ over $\cH$. This is always well-defined for a compact linear operator $A$. We want to show the strong convexity of $L_\alpha$, i.e.,
\# \label{eq:strongcovexofTik}
\frac{\alpha}{2} \| \widebar f_T - f^{\alpha}\|_\cH^2 \leq L_\alpha(\widebar f_T) - L_\alpha (f^\alpha).
\#
If \eqref{eq:strongcovexofTik} is true, under the conditions of Theorem \ref{thm:globalconvergence} (2-layer NN case), we have with probability at least $1-\delta$ over the sampling process,
\# 
\E_\init [ \| \widebar f_T - f^{\alpha}\|_\cH^2 ] & \leq \frac{2}{\alpha} \E_\init [L_\alpha(\widebar f_T) - L_\alpha (f^\alpha)] 
\\
&= \frac{2}{\alpha}\cO\Big( a\eta B + \frac{B}{T\eta} + \frac{a B^{3/2}\log^{1/2}(1/\delta)}{T^{1/2}} + \frac{aB^{5/2}}{m^{1/4}}\Big). \label{05100454}
\#
Setting $\eta = (aT)^{-1/2}$ where $a = \max\{\alpha, 1 \}$, and combining results from Lemma \ref{lm:regbias} and \eqref{05100454} we complete the proof.

Now we show \eqref{eq:strongcovexofTik}. For all $x \in \cH$, $x+h \in \cH$,
\begin{align}
2 {L_\alpha}(x+h)
&=
\|A(x+h)-b\|_{\cE}^2+\alpha\|x+h\|_{\cH}^2
\\&=
\|Ax-b\|_{\cE}^2+\|Ah\|_{\cE}^2 + \langle Ax-b,Ah\rangle_{\cE}+\alpha\|x\|_{\cH}^2+\alpha\|h\|_{\cH}^2+2\alpha\langle x,h\rangle_{\cH}
\\&=
2{L_\alpha}(x)+2\alpha\langle x,h\rangle_{\cH} + 2\langle Ax-b,Ah\rangle_{\cE}+\|Ah\|_{\cE}^2+\alpha\|h\|_{\cH}^2
\\&=
2{L_\alpha}(x)+2\alpha\langle x,h\rangle_{\cH} + 2\langle A^*(Ax-b),h\rangle_{\cH}+\|Ah\|_{\cE}^2+\alpha\|h\|_{\cH}^2
\\&=
2{L_\alpha}(x)+2\langle\alpha x+A^*Ax - A^*b,h\rangle_{\cH}+\|Ah\|_{\cE}^2+\alpha\|h\|_{\cH}^2.
\end{align}
Moreover, the regularized solution $f^{\alpha}$ is given by the unique solution to the equation $\alpha f^{\alpha}+A^{*} A f^{\alpha}=A^{*} b$ and depends continuously on $b$ \citep{kress1989linear}. Setting $x = f^\alpha$, $h = f -f^\alpha$ and applying $\alpha f^\alpha + A^*Af^\alpha = A^* b$ complete the proof of \eqref{eq:strongcovexofTik}.

\section{Compactness of conditional expectation operators} \label{app:compactofcondexpop}
Let $X = [X_1^\top, X_2^\top]\tp$ be a random vector with distribution $F_X$ and let $F_{X_1}, F_{X_2}$ be the marginal distributions of $X$ and $Y$, respectively. Assume there is no common elements in $X_1$ and $X_2$. Define Hilbert spaces $\cH = L^2(X_1)$ and $\cE = L^2(X_2)$. Let $A$ be the conditional expectation operator:
\$
 A:& \cH \to \cE
\\
& f(\cdot) \to \E[f(X_1) \mid X_2 = \cdot \,]\,.
\$
 
If there is no common elements in $X_1$ and $X_2$, compactness of an conditional expectation operator is in fact a mild condition \citep{carrasco2007linear}. If p.d.f.s of $X,X_1$ and $X_2$ exist, denoted $f_X, f_{X_1}$ and $f_{X_2}$, then $A$ can be represented as an integral operator with kernel 
\$
k(x_1, x_2)=\frac{f_{X_1, X_2}(x_1, x_2)}{f_{X_1}(x_1) f_{X_2}(x_2)},
\$
 and $(Af)(x_2) = \int k(x_1,x_2) f(x_1) f_{X_1}(x_1)dx_2$. In this case, a sufficient condition for compactness of $A$ is 
\[
\iint\left[\frac{f_{{X_1}, {X_2}}({x_1}, {x_2})}{f_{{X_1}}({x_1}) f_{{X_2}}({x_2})}\right]^{2} f_{{X_1}}({x_1}) f_{{X_2}}({x_2}) {d} {x_1} {d} {x_2}<\infty.
\]

 We now discuss well-posedness of \eqref{eq:sem}. The operator equation \eqref{eq:sem} is called well-posed (in Hadamard's sense) if (i) \textit{(existence)} a solution $f$ exists, (ii) \textit{(uniqueness)} the solution $f$ is unique, and (iii) \textit{(stability)} the solution $f$ is continuous as a function of $b$. More precisely, if $A: \mathcal{H} \to \mathcal{E}$ is bijective and the inverse operator $A^{-1}$ is continuous, then equation \eqref{eq:sem} is well-posed \citep{kress1989linear}. Injectivity is usually a property of the data distribution and is tantamount to assuming identifiability of the structural function

\section{A comment on Dual IV} \label{app:dualivcomment}
In this section, we review the work of Dual IV \citep{muandet2019dual} and point out the differences between their work and ours. Dual IV considers nonparametric IV estimation using min-max game formulation and bears similarities with this work. However, we remark that our framework \eqref{eq:sem} includes a wide range of models, including IV regression, and that the use of NNs and detailed analysis on the convergence of the training algorithm also distinguish our work from Dual IV. The goal of this section is to show the resulting min-max problem for IV regression in this paper has a natural connection with GMM.

Recall that IV regression considers the following conditional moment equation
\# 
\E[Y - g(X) \mid Z] = 0. \tag{\ref{eq:iv}, revisited}
\# 
Let $\mathcal{G}$ be an arbitrary class of continuous functions which we assume contains the truth that fulfills the integral equation. Dual IV proposes to solve
\# \label{eq:dualiv}
\min _{g \in \mathcal{G}} R(g):=\mathbb{E}_{Y Z}\left[  \big(Y - \mathbb{E}[g(X) \mid Z] \big) ^ 2 \right],
\#
while this paper solves
\[\min _{g \in \mathcal{G}} L(g)=\|Af - b\|_\cE^2 = \mathbb{E}_{ Z}\left[  \big( \E[Y\mid Z] - \mathbb{E}[g(X) \mid Z] \big) ^ 2 \right],\]
an unregularized version of Example \ref{ex:iv}. The operator $A$ and $b\in \cE$ are defined in Example \ref{ex:iv}.

To introduce the maximizer, Dual IV \citep{muandet2019dual} resorts to the interchangeability principle.
\begin{lemma}[Interchangeable principle] \label{lm:exchange}
Let $(\Omega, \mathcal{F}, \P)$ be a probability space, $f: \R^n \times \Omega \rightarrow \mathbb{R} \cup\{+\infty\}$, and $\mathcal{L}_2 = \mathcal{L}_2(\Omega, \cF, \P)$ be the class of square integrable functions. Let $\mathcal{X}$ be the set of mappings $\chi: \Omega \rightarrow \R^n$ such that $f_{\chi} \in \mathcal{L}_2$, where $f_{\chi}(\cdot):=f(\chi(\cdot), \cdot)$. Assume $F(\omega):=\sup _{x \in X} f(x, \omega) \in \mathcal{L}_2$ and that $f$ is upper semi-continuous\footnote{Random upper semi-continuous, to be precise.}. Then the following holds.
\[\mathbb{E}\Big[\sup _{x \in X} f(x, \omega)\Big]=\sup _{\chi \in \mathcal{X}} \mathbb{E}[f(\chi(\omega), \omega)].\]
\end{lemma}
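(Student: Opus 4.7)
The plan is to prove the two inequalities separately, with the nontrivial content concentrated in the $\geq$ direction, which requires a measurable selection argument.

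The easy direction is $\E[\sup_x f(x,\omega)] \geq \sup_{\chi \in \cX}\E[f(\chi(\omega),\omega)]$. For any $\chi \in \cX$ and every $\omega$, we have the pointwise bound $f(\chi(\omega),\omega) \leq \sup_{x\in \R^n} f(x,\omega) = F(\omega)$. Integrating against $\P$ and then taking the supremum over $\chi \in \cX$ gives the inequality. This uses only measurability of $f_\chi$ (built into the definition of $\cX$) and the assumption $F \in \mathcal{L}_2$.

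For the reverse direction I would construct, for each $\varepsilon > 0$, a measurable $\varepsilon$-selector $\chi_\varepsilon \in \cX$ satisfying
\[
f(\chi_\varepsilon(\omega),\omega) \geq F(\omega) - \varepsilon \quad \text{for $\P$-a.e. } \omega.
\]
Here is where upper semi-continuity of $f(\cdot,\omega)$ plus joint measurability (i.e., $f$ is a normal integrand in the sense of Rockafellar) is essential. Under these conditions the set-valued map $\omega \mapsto S_\varepsilon(\omega) := \{x : f(x,\omega) \geq F(\omega) - \varepsilon\}$ has nonempty closed values and a measurable graph, so the Kuratowski–Ryll-Nardzewski measurable selection theorem (or equivalently Rockafellar's selection theorem for normal integrands) yields a measurable selector $\chi_\varepsilon$. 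If $S_\varepsilon(\omega)$ is empty on a null set (possible when the supremum is not attained), one redefines $\chi_\varepsilon$ arbitrarily there.

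Next I would verify that $\chi_\varepsilon \in \cX$, i.e., $f_{\chi_\varepsilon} \in \mathcal{L}_2$. By construction $F(\omega) - \varepsilon \leq f(\chi_\varepsilon(\omega),\omega) \leq F(\omega)$, so $|f_{\chi_\varepsilon}| \leq |F| + \varepsilon$, which is in $\mathcal{L}_2$ by the standing assumption that $F \in \mathcal{L}_2$. Taking expectations,
\[
\E[f(\chi_\varepsilon(\omega),\omega)] \geq \E[F(\omega)] - \varepsilon,
\]
so $\sup_{\chi \in \cX}\E[f(\chi(\omega),\omega)] \geq \E[F(\omega)] - \varepsilon$, and letting $\varepsilon \downarrow 0$ concludes the proof.

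The main obstacle is the measurable selection step: one must argue that upper semi-continuity of $x \mapsto f(x,\omega)$ together with joint measurability makes $\omega \mapsto S_\varepsilon(\omega)$ a measurable closed-valued multifunction. Upper semi-continuity ensures the superlevel set $\{x : f(x,\omega) \geq F(\omega) - \varepsilon\}$ is closed for each fixed $\omega$, and joint measurability (together with measurability of $F$, which follows since $F$ is the pointwise supremum of a normal integrand) gives measurability of the graph $\{(\omega,x) : f(x,\omega) \geq F(\omega) - \varepsilon\}$. These are precisely the hypotheses invoked in Rockafellar's selection theorem, so upper semi-continuity is the crucial regularity condition that the statement imposes on $f$.
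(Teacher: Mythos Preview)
The paper does not actually prove this lemma; its ``proof'' consists solely of citations to Proposition~2.1 of \cite{shapiro2017interchangeability} and Proposition~1 of \cite{dai17a}. Your proposal is the standard measurable-selection argument that underlies those results: the trivial direction by pointwise domination, and the nontrivial direction via an $\varepsilon$-optimal measurable selector obtained from the normal-integrand structure (random upper semi-continuity) of $f$. This is correct and is essentially the route taken in the cited references, so there is nothing substantive to compare.

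One small remark: you implicitly use joint measurability of $f$ to get measurability of the graph of $S_\varepsilon$, which is not stated explicitly in the lemma but is packaged into the footnoted ``random upper semi-continuous'' (i.e., normal integrand) hypothesis; it is worth making that explicit since upper semi-continuity in $x$ alone is not enough for the selection theorem.
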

\begin{proof}
See Proposition 2.1 in \citep{shapiro2017interchangeability}. See also Proposition 1 in \citep{dai17a} for a proof for the case where $f:\R \times \Omega \rightarrow \R$.
\end{proof}

With the interchangeability principle, \eqref{eq:dualiv} can be rewritten as 
\[\min _{g \in \mathcal{G}} \max _{u \in \mathcal{U}} \Psi(g, u):=\mathbb{E}_{X Y Z}[(g(X)-Y) u(Y, Z)] -\tfrac{1}{2} \mathbb{E}_{Y Z}\left[u(Y, Z)^{2}\right].\]
By comparison, an unregularized version of the min-max problem derived in this paper \eqref{eq:ivminmaxwithreg} is
\# \label{eq:ivminmaxwithout}
\min_{g\in L^2(X)} \max_{u \in L^2(Z)} \E_{XYZ}[ ( g(X) - Y ) \cdot u(Z) - \tfrac12u^2(Z) ]. 
\#

The absence of the variable $Y$ in the maximizer $u$ in \eqref{eq:ivminmaxwithout} facilitates a natural connection between \eqref{eq:ivminmaxwithout} and GMM. 

To achieve such interpretation, we first introduce a GMM estimator for \eqref{eq:iv}. The conditional moment restriction \eqref{eq:iv} implies that for a set of functions $f_{1}, f_{2}, \ldots, f_{m}$ of $Z$, we have $\mathbb{E}\left[ (Y-g(X)) f_{j}(Z)\right]=0$. Define by $\psi(f, g):=\mathbb{E}_{X Y Z}[(Y-g(X)) f(Z)]$ the moment violation function, and the GMM estimator
\[g_{\mathrm{GMM}} \in \arg \min _{g \in \mathcal{G}} \frac{1}{2} \sum_{j=1}^{m} \psi\left(f_{j}, g\right)^{2}.\] Collect the moment violations by a vector ${\psi_v}(g):=\left(\psi\left(f_{1}, g\right), \ldots, \psi\left(f_{m}, g\right)\right)^{\top} \in \mathbb{R}^{m}$. To achieve efficiency the moments are usually weighted. Let $\Lambda$ be a $m$ by $m$ symmetric matrix. We define the quadratic norm $\| \phi \|_\Lambda^2 = \phi \tp \Lambda \phi$ given a vector $\phi$.

Now we are ready to state the connection between GMM and \eqref{eq:ivminmaxwithout}. Define the
space of maximizer $\mathcal{U}=\operatorname{span}\left\{f_{1}, \ldots, f_{m}\right\}$. We focus on the inner maximization of \eqref{eq:ivminmaxwithout}. Define
 \[J(g) \coloneqq \max_{u \in \cU }  \E_{XYZ}[ ( g(X) - Y ) \cdot u(Z) - \tfrac12u^2(Z) ].\]
Note that maximizer is now constrained in $\cU$. Mimicking Theorem 5 in \citep{muandet2019dual}, we can show $J(g)$ is in fact a weighted sum of the moment violations $\{\psi(f_j,g)\}$. 
\begin{lemma} \label{lm:ivandgmm}
Let $f_{1}, f_{2}, \ldots, f_{m}$ be a set of real-valued functions of $Z$. Define the weight matrix $\Lambda:=\mathbb{E}_{Z}[\mathbf{f}(Z)  \mathbf{f}(Z) \tp ]$ where $\mathbf{f}:=\left(f_{1}( Z), \ldots, f_{m}( Z)\right)^{\top}$. Then $J(g) =\tfrac12 \| \psi_v (g)\|_{\Lambda^{-1}}^2$.
\end{lemma}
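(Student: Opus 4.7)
\textbf{Proof plan for Lemma \ref{lm:ivandgmm}.} The strategy is to reduce the inner maximization over $\cU = \mathrm{span}\{f_1,\dots,f_m\}$ to an unconstrained finite-dimensional quadratic maximization in $\R^m$, then solve it in closed form.

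First, parametrize every $u \in \cU$ as $u(Z) = \mathbf{c}^\top \mathbf{f}(Z)$ with $\mathbf{c} \in \R^m$. By linearity of expectation,
\[
\E_{XYZ}\bigl[(g(X)-Y)u(Z)\bigr] = \sum_{j=1}^m c_j\,\E\bigl[(g(X)-Y) f_j(Z)\bigr] = -\mathbf{c}^\top \psi_v(g),
\]
using the definition $\psi(f_j,g) = \E[(Y-g(X))f_j(Z)]$. Likewise, by the definition of $\Lambda$,
\[
\tfrac12 \E_Z[u(Z)^2] = \tfrac12\,\mathbf{c}^\top \E_Z[\mathbf{f}(Z)\mathbf{f}(Z)^\top]\,\mathbf{c} = \tfrac12\,\mathbf{c}^\top \Lambda \mathbf{c}.
\]
Thus the inner problem is
\[
J(g) = \max_{\mathbf{c} \in \R^m}\; \Bigl\{-\mathbf{c}^\top \psi_v(g) - \tfrac12 \mathbf{c}^\top \Lambda \mathbf{c}\Bigr\}.
\]

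Next, since $\Lambda$ is symmetric positive (semi)definite, the objective is a concave quadratic in $\mathbf{c}$. Assuming $\Lambda$ is invertible (i.e., the $f_j$'s are linearly independent in $L^2(Z)$), setting the gradient to zero gives $\mathbf{c}^\star = -\Lambda^{-1}\psi_v(g)$. Substituting back,
\[
J(g) = \psi_v(g)^\top \Lambda^{-1}\psi_v(g) - \tfrac12 \psi_v(g)^\top \Lambda^{-1}\psi_v(g) = \tfrac12\,\|\psi_v(g)\|_{\Lambda^{-1}}^2,
\]
which is the claim. If $\Lambda$ is only PSD, the same identity holds with $\Lambda^{-1}$ replaced by the Moore--Penrose pseudoinverse, provided $\psi_v(g) \in \mathrm{range}(\Lambda)$; otherwise $J(g) = +\infty$, which corresponds to an unbounded direction in $\cU$.

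The argument is essentially a direct calculation, so there is no real obstacle. The only subtlety worth flagging is the standing assumption that the chosen basis $\{f_j\}$ yields an invertible Gram matrix $\Lambda$; this is a mild identifiability-type condition on the test functions and can be arranged by discarding redundant $f_j$'s. Once this is in place, the equivalence with the weighted GMM objective $\tfrac12\|\psi_v(g)\|_{\Lambda^{-1}}^2$ is immediate, making transparent the link between the saddle-point formulation \eqref{eq:ivminmaxwithout} and the GMM estimator with optimal weighting $\Lambda^{-1}$.
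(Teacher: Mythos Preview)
Your proof is correct and follows essentially the same route as the paper: parametrize $u\in\cU$ by a coefficient vector, reduce the inner maximization to a concave quadratic in $\R^m$, and solve in closed form to obtain $\tfrac12\psi_v(g)^\top\Lambda^{-1}\psi_v(g)$. Your treatment of the sign convention is in fact cleaner than the paper's (which tacitly absorbs a sign flip that cancels in the quadratic form), and your remark on the pseudoinverse case is a harmless extra.
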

\begin{proof}
The proof is identical to Appendix C of \citep{muandet2019dual} except for replacing $f(Y,Z)$ with $f(Z)$. The proof relies on simple algebra manipulation and is presented for completeness.
For any $u \in \mathcal{U}, u=\sum_{j=1}^{m} \alpha_{j} f_{j}$ for some $\boldsymbol{\alpha} = \left(\alpha_{1}, \ldots, \alpha_{m}\right)^{\top} \in \mathbb{R}^{m}$.
\$
J(g)& =\max _{\alpha \in \mathbb{R}^{m}} \mathbb{E}_{X Y Z}\left[(g(X)-Y)\left(\sum_{j=1}^{m} \alpha_{j} f_{j}( Z)\right)\right] -\frac{1}{2} \mathbb{E}_{ Z}\left[\left(\sum_{j=1}^{m} \alpha_{j} f_{j}( Z)\right)^{2}\right]
\\
& = \max _{\boldsymbol{\alpha} \in \mathbb{R}^{m}} \sum_{i=1}^{m} \alpha_{j} \mathbb{E}_{X Y Z}\left[(g(X)-Y) f_{j}( Z)\right] -\frac{1}{2} \mathbb{E}_{ Z}\left[\left(\sum_{j=1}^{m} \alpha_{j} f_{j}( Z)\right)^{2}\right]
\\
& =\max _{\boldsymbol{\alpha} \in \mathbb{R}^{m}} \boldsymbol{\alpha}^{\top} {\psi_v}-\frac{1}{2} \boldsymbol{\alpha}^{\top} {\Lambda} \boldsymbol{\alpha}
\\
& = \frac{1}{2} \psi_v^{\top} \Lambda^{-1} \psi_v.
\$
\end{proof}

Lemma \ref{lm:ivandgmm} shows that if the maximizer is constrained to be in the span of a set of pre-defined test functions $\{f_j\}$, the minimization in \eqref{eq:ivminmaxwithout} in fact produces a weighed GMM estimator. In contrast, the GMM interpretation provided in Section 3.5 of \citep{muandet2019dual} requires the definition of a so-called augmented IV $W := ( Y,Z)$. It is unnatural to view the response variable $Y$ as a component of the IV.


 \newpage
\section{A roadmap to the proof of Theorem
\ref{thm:consisttwolayer}} \label{app:roadmap}
In Figure \ref{fig:roadmap} we can see throughout the discussion we make a couple of simplifying assumptions (e.g., Assumption \ref{as:containtruefunction} assumes the conditional expectation operator is close in $\cF_{\text{NN}}$, and Assumption \ref{as:zeroapprox} assumes the primal problems \eqref{eq:primewithnn} and \eqref{eq:tikhnovov2} give the same solution). These assumptions are justified by the representation power of NNs. One could instead explicitly incorporate approximation error in the bounds.
\tikzstyle{mybox} = [    draw=gray]

\begin{figure}[!h]
    \centering
    
  \begin{tikzpicture}

\node[mybox]  at (0,0) (box1) {%
    \begin{minipage}{0.40\textwidth}
    \$
      Af=b \tag{\ref{eq:sem}}
      \\
      \Rightarrow f
      \$
    \end{minipage}
};

\node[mybox]  at (0,-3) (box2){%
    \begin{minipage}{0.35\textwidth}
     \$ & \argmin_{f\in \cH} \| Af-b\|_\cE + \tfrac{\alpha}{2}
        \|f\|_\cH^2 \tag{\ref{eq:tikhnovov2}}
       \\ 
        & \Rightarrow f^\alpha
    \$
    \end{minipage}
};

\node[mybox] at (8,-3) (box3) {%
    \begin{minipage}{0.35\textwidth}
     \$ & \argmin_{f\in \cF_{\text{NN}}} \| Af-b\|_\cE + \tfrac{\alpha}{2}
        \|f\|_\cH^2  \tag{\ref{eq:primewithnn}}
       \\ 
        & \Rightarrow f^\alpha_{\text{NN}}
    \$
    \end{minipage}
};

\node[mybox]  at (8,-6) (box4) {%
    \begin{minipage}{0.40\textwidth}
    \$
     & \min_{f\in \cF_{\text{NN}}} \max_{u\in \cF_{\text{NN}}}
 \phi(f,u)  \tag{\ref{eq:minmaxwithnn}}
       \\ 
        & \Rightarrow \text{A saddle point}
    \$
    \end{minipage}
};

\node[mybox] (box5) at (8,-9){%
    \begin{minipage}{0.40\textwidth}
     \$
     \text{\ref{ppoalgo}}
     \\
     \Rightarrow \widebar f_T
    \$
    \end{minipage}
};

\draw [<-] (box1) -- node [right]{Assumption \ref{as:reg}, Lemma \ref{lm:regbias}} (box2);

\draw [<-] (box2) -- node [above]{Assumption \ref{as:zeroapprox}} (box3);

\draw [<-] (box3) -- node [right]{Assumption \ref{as:containtruefunction}} (box4);

\draw [<-] (box4) -- node [left]{Convergence of SGD, NN linearization} (box5);
  \end{tikzpicture}

    \caption{Relation between the quantities of interest. Texts above/near the arrows summarize the key elements of connecting different problems.}
    \label{fig:roadmap}
\end{figure}
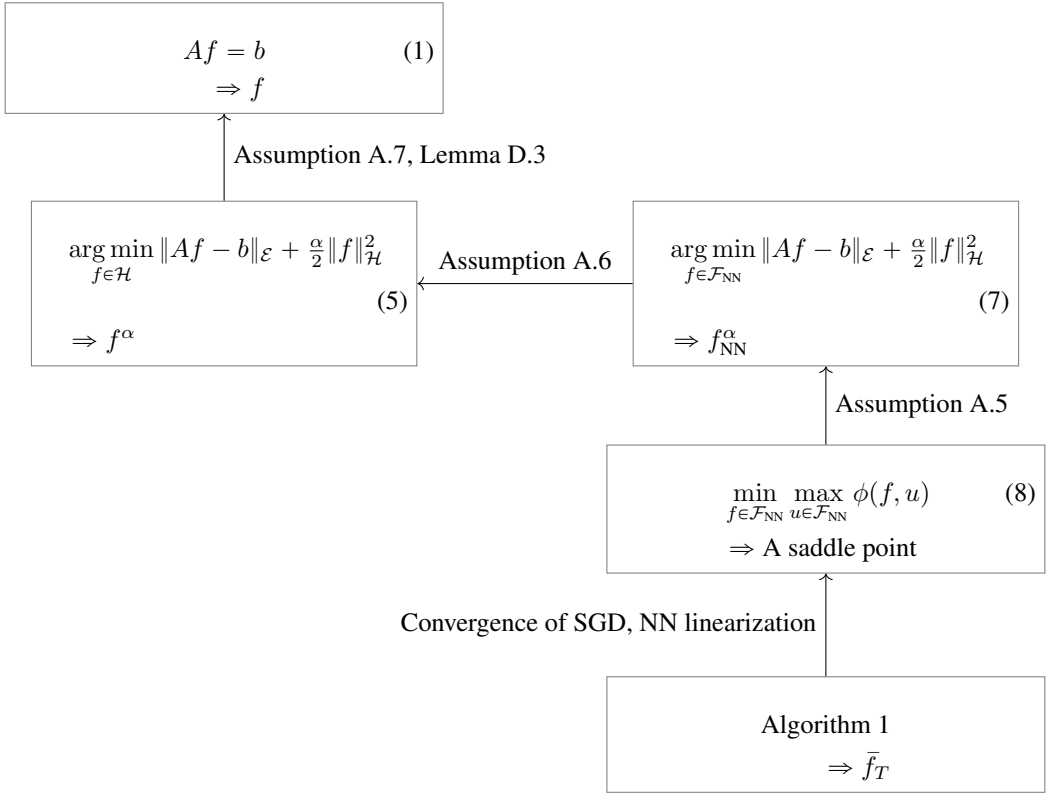

\end{appendix}
\end{document}